\def\@seccntformat#1{\@ifundefined{#1@cntformat}%
   {\csname the#1\endcsname\quad}  
   {\csname #1@cntformat\endcsname}
}
\let\oldappendix\appendix 
\renewcommand\appendix{%
    \oldappendix
    \newcommand{\section@cntformat}{\appendixname~\thesection\quad}
}
\renewcommand{\algorithmicrequire}{ \textbf{Input:}} 
\renewcommand{\algorithmicensure}{ \textbf{Output:}} 
\declaretheoremstyle[%
  headfont=\normalfont\itshape,%
  postheadspace=1em,%
  qed=\qedsymbol%
]{mystyle} 
\declaretheorem[name={Proof},style=mystyle,unnumbered,
]{spacelessprf}
\newcommand{\bftab}{\fontseries{b}\selectfont}
\newcommand*\diff{\mathop{}\!\mathrm{d}}
\newcommand*\argmax{\mathop{\arg \max}}
\newcommand*\argmin{\mathop{\arg \min}}
\renewcommand{\H}{\mathcal{H}} 
\newcommand{\X}{\mathcal{X}} 
\newcommand{\Y}{\mathcal{Y}} 
\newcommand{\E}{\mathbb{E}} 
\renewcommand{\d}{\mathrm{d}} 
\newcommand{\dP}{\mathrm{d} P}
\newtheorem{Theorem}{Theorem}
\newtheorem{Lemma}{Lemma}
\newtheorem{Proposition}{Proposition}
\title{\vspace{-7mm}\fontsize{20pt}{10pt}\selectfont\textbf{Model Reuse with Reduced \\ \vspace{1.2mm} Kernel Mean Embedding Specification}} 
\author{Xi-Zhu Wu}
\author{
Xi-Zhu Wu\textsuperscript{1}, Wenkai Xu\textsuperscript{2}, Song Liu\textsuperscript{3,4}, Zhi-Hua Zhou\textsuperscript{1}\thanks{Corresponding author. Email: zhouzh@nju.edu.cn}\\
\small
\textsuperscript{1}National Key Laboratory for Novel Software Technology, Nanjing University, China.\\
\small
\textsuperscript{2}Gatsby Unit of Computational Neuroscience, University College London, UK.\\
\small
\textsuperscript{3}School of Mathematics, University of Bristol, UK. $^4$The Alan Turing Institute, UK.
}
\date{}
\begin{document}
\maketitle

\hrule
\begin{abstract} 
Given a publicly available pool of machine learning models constructed for various tasks, when a user plans to build a model for her own machine learning application, is it possible to build upon models in the pool such that the previous efforts on these existing models can be reused rather than starting from scratch? Here, a grand challenge is how to find models that are helpful for the current application, without accessing the raw training data for the models in the pool. In this paper, we present a two-phase framework. In the upload phase, when a model is uploading into the pool, we construct a reduced kernel mean embedding (RKME) as a \emph{specification} for the model. Then in the deployment phase, the relatedness of the current task and pre-trained models will be measured based on the value of the RKME specification. Theoretical results and extensive experiments validate the effectiveness of our approach.
\end{abstract} 
\hrule
\section{Introduction}
Recently, the learnware~\citep{learnware} paradigm has been proposed. A learnware is a well-performed pre-trained machine learning model with a specification which explains the purpose and/or specialty of the model. The provider of a learnware can upload it to a market, and ideally, the market will be a pool of (model, specification) pairs solving different tasks. When a person is going to tackle her own learning task, she can identify a good or some useful learnwares from that market whose specifications match her requirements and apply them to her own problem.

One of the most important properties of learnware is to enable future users to build their own applications upon previous models without accessing the raw data used to train these models, and thus, the machine learning experience is shared but the data privacy violation and data improper disclosure issues are avoided. This property is named \emph{inaccessibility} of training data.

Note that it may be too optimistic to expect that there is a model in the pool which was trained exactly for the current task; maybe there is one, or multiple, or even none helpful models. Thus, a key challenge is how to provide each model with a specification such that given a new learning task, it is possible to identify helpful models from the model pool. This property is named \emph{reusability} of pre-trained models.

It was thought that logical clauses or some simple statistics may be used to construct the model specification~\citep{learnware}, though there has been no effective approach yet. In this paper, we show that it is possible to construct a reduced kernel mean embedding (RKME) specification for this purpose, where both inaccessibility and reusability are satisfied under reasonable assumptions.

Kernel mean embedding~\citep{KME_survey} is a powerful technique for solving distribution-related problems, and has made widespread contribution in statistics and machine learning, like two-sample testing~\citep{Interpretable_TSS}, casual discovery~\citep{KME_casual}, and anomaly detection~\citep{KME_anomaly}. Roughly speaking, KME maps a probability distribution to a point i{}n reproducing kernel Hilbert space (RKHS), and can be regarded as a representation of distribution. Reduced set construction~\citep{TNN99_RS} keeps the representation power of empirical KMEs, and blocks access to raw data points at the same time.

To clearly show why reduced KME is a valid specification in the learnware paradigm, we decompose the paradigm into a two-phase framework. Initially, in the upload phase, the pre-trained model provider is required to construct a reduced set of empirical KME as her model's specification and upload it together with the built predictive model into a public pool. The RKME represents the distribution of model's training data, without using any raw examples. Subsequently, in the deployment phase, we demonstrate that the user can select suitable pre-trained model(s) from the pool to predict her current task by utilizing the specifications and her unlabeled testing points in a systematic way.

RKME specification is a bridge between the current task and solved tasks upon which the pre-trained models are built. We formalize two possible relationships between the current and the solved tasks. The first one is \emph{task-recurrent assumption}, saying the data distribution of the current task matches one of the solved tasks. We then use the maximum mean discrepancy (MMD) criteria to find the unique fittest model in the pool to handle all testing points. The second one is \emph{instance-recurrent assumption}, saying the distribution of the current task is a mixture of solved tasks. Our algorithm estimates the mixture weight, uses the weight to generate auxiliary data mimicking the current distribution, learns a selector on these data, then uses the selector to choose the fittest pre-trained model for each testing point. Kernel herding~\citep{herding_thesis}, a fast sampling method for KME, is applied in mimic set generation.

Our main contributions are:
\begin{compactitem}
\item Propose using RKME as the specification under the learnware paradigm, and implement a two-phase framework to support the usage.
\item Show the inaccessibility of training data in the upload phase, i.e., no raw example is exposed after constructing specifications. 
\item Prove the reusability of pre-trained models in the deployment phase, i.e., the current task can be handled with identified existing model(s).
\item Evaluate our proposal through extensive experiments including a real industrial project.
\end{compactitem}

In the following sections, we first present  necessary background knowledge, then introduce our proposed framework, followed by theoretical analysis, related work, experiments and finally the conclusion.

\section{Background} \label{sec:background}
In this section, we briefly introduce several concepts and techniques. They will be incorporated and further explained in detail through this paper.

\subsection{Kernel Mean Embeddings}
Let $X \in \X$ be a random variable in $\mathbb{R}^d$ and $P_X$ be a measurable probability function of $X$. Let $k:\mathcal{X}\times\mathcal{X}\rightarrow\mathbb{R}$ be reproducing kernels for $\X$, with associated RKHS $\mathcal{H}_k$ and $\phi:x\in\mathcal{X}\mapsto k(x,\cdot)\in\mathcal{H}_k$,
the corresponding canonical feature maps on $\X$. Throughout this paper, we assume the kernel function is continuous, bounded, and positive-definite. The kernel function is considered a similarity measure on a pair of points in $\mathcal{X}$.

Kernel mean embedding (KME)~\citep{Smola07KME} is defined by the mean of a $\H_k$-valued random variable that maps the probability distributions to an element in RKHS associated with kernel $k$~\citep{Learning_with_Kernels}. Denote the distribution of an $\mathcal{X}$-valued random variable by $P$, then its kernel mean embedding is
\begin{equation}\label{eq:KME_def}
    \mu_k(P)\coloneqq \int_\mathcal{X} k(x,\cdot)\diff P(x)
\end{equation}
By the reproducing property, $\forall f\in \H_{k}, \langle f, \mu_k(P)\rangle = \E_{P}[f(X)]$, which demonstrates the notion of mean.

By using characteristic kernels~\citep{Fukumizu07}, it was proved that no information about the distribution $P$ would be lost after mapping to $\mu_k(P)$. Precisely, $\lVert \mu_k(P)-\mu_k(Q) \rVert_{\mathcal{H}_k}=0$ is equivalent to $P=Q $. This property makes KME a theoretically sound method to represent a distribution. An example of characteristic kernel is the Gaussian kernel
 \begin{equation} \label{eq:gaussian_kernel}
     k(x,x^\prime)=\exp(-\gamma \lVert x-x^\prime \rVert), \gamma>0.
 \end{equation}

 In learning tasks, we often have no access to the true distribution $P$, and consequently to the true embedding $\mu_k(P)$. Therefore, the common practice is to use examples $X=\{x_n\}_{n=1}^N\sim P^N$, which constructs an empirical distribtuion $P_X$, to approximate \eqref{eq:KME_def}:
\begin{equation}\label{eq:KME_empirical}
    \widehat{\mu_k}(P_X)\coloneqq \frac{1}{N}\sum_{n=1}^{N} k(x_n,\cdot)
\end{equation}

If all functions in $\mathcal{H}_k$ are bounded and examples are i.i.d drawn, the empirical KME $\widehat{\mu_k}(P_X)$ converges to the true KME $\mu_k(P)$ at rate $O(1/\sqrt{N})$, measured by RKHS norm \cite[Theorem 1]{Lopez15}.

\subsection{Reduced Set Construction}
Reduced set methods were first proposed to speed up SVM predictions~\citep{SVM_RS} by reducing the number of support vectors, and soon were found useful in kernel mean embeddings~\citep{TNN99_RS} to handle storage and/or computational difficulties.

The empirical KME $\widehat{\mu_k}(P_X)$ is an approximation of true KME $\mu_k(P)$, requiring all the raw examples. It is known that we can approximate the empirical KME with fewer examples. Reduced set methods find a weighted set of points $R=(\bm{\beta},Z)=\{(\beta_m,z_m) \}_{m=1}^M$ in the input space to minimize the distance measured by RKHS norm
\begin{equation}\label{eq:rs_obj}
   \lVert\widehat{\mu_k}(P_X)-\widehat{\mu_k}(P_R) \rVert_{\mathcal{H}_k}^2=\Big\lVert\sum_{n=1}^{N}\frac{1}{N} k(x_n,\cdot)-\sum_{m=1}^{M} \beta_m k(z_m,\cdot) \Big\rVert_{\mathcal{H}_k}^2
\end{equation}

It is trivial to achieve perfect approximation if we are allowed to have the same number of points in the reduced set ($M=N$). Therefore we focus on the $M<N$ case by introducing additional freedom on real-value coefficients $\bm{\beta}$ and vectors $Z$. If points in $Z$ are selected from $X$, it is called reduced set selection. Otherwise, if $\{z_m\}$ are newly constructed vectors, it is called reduced set construction~\citep{ICCV_rs}. Since the latter does not expose raw examples, we apply reduced set construction to compute the specification in the upload phase of our proposal.

\subsection{Kernel Herding}
Kernel herding algorithm is an infinite memory deterministic process that learns to approximate a distribution with a collection of examples~\citep{kernel_herding}. Suppose we want to draw examples $X=\{x_n\}_{n=1}^N$ from distribution $P$, but the probability distribution function of $P$ is unknown. Given the kernel mean embedding $\mu_k(P)$ of $P$, assume $k(x,x)$ is bounded for all $x\in \X$ and the further restrictions of finite-dimensional discrete state spaces~\citep{Welling09}, kernel herding will iteratively draw an example in terms of greedily reducing the following error at every iteration:
\begin{equation} \label{eq:herding_obj}
  \lVert\widehat{\mu_k}(P_X)-\mu_k(P)\rVert^2_{\H_k}=\Big\lVert \frac{1}{N}\sum_{n=1}^N k(x_n,\cdot) -\mu_k(P)\Big\rVert^2_{\H_k}
\end{equation}

A remarkable result of kernel herding is, it decreases the square error in~\eqref{eq:herding_obj} at a rate $O(1/N)$, which is faster than generating independent identically distributed random samples from $P$ at a rate $O(1/\sqrt{N})$.

Comparing with~\eqref{eq:rs_obj} in last section, if we set $\mu_k(P)=\widehat{\mu_k}(P_R)$, kernel herding looks like an ``inverse'' operation of reduced set construction. Reduce set construction in~\eqref{eq:rs_obj} is ``compressing'' the KME, while kernel herding in~\eqref{eq:herding_obj} is ``decompressing'' the information in reduced KME if $N$ is large. We will apply kernel herding in the deployment phase to help recover the information in reduced KMEs.

\section{Framework}
In this section, we first formalize our problem setting with minimal notations, then show how to construct RKME in the upload phase and how to use RKME in the deployment phase.
\subsection{Problem Formulation}\label{sec:formulation}
Suppose there are in total $c$ providers in the upload phase, they build learnwares on their own tasks and generously upload them to a pool for future users. Each of them has a private local dataset $S_i=\{(x_n,y_n)\}_{n=1}^{N_i}$, which reflects the task $T_i$. Task $T_i$ is a pair $(P_i,f)$ defined by a distribution $P_i$ on input space $\X$ and a global optimal rule function $f:\X \rightarrow \Y$,
\begin{equation}
\begin{split}
     \forall i \in [c], \forall (x,y)\in S_i,f(x)=y.
\end{split} 
\end{equation}

All providers are competent, and the local datasets are sufficient to solve their tasks. Formally speaking, their models $\widehat{f}_i$ enjoy a small error rate $\epsilon>0$ with respect to a certain loss function $L$ on their task distribution $P_i$:
\begin{equation} \label{eq:small-loss}
     \forall i \in [c], \mathcal{L}(P_i,f,\widehat{f}_i)=\mathds{E}_{x\sim P_i }\big[L(\widehat{f}_i(x),f(x))\big]\leq \epsilon.
\end{equation} 

With a slight abuse of notation, here $L:\mathcal{Y}\times \mathcal{Y}\rightarrow \mathbb{R}^+$ can be either a regression loss or classification loss. Since tasks $\{T_i\}_{i=1}^c$ are equipped with low-error pre-trained models, they are referred to as solved tasks throughout this paper.

In the deployment phase, a new user wants to solve her current task $T_t$ with only unlabeled testing data $x\sim P_t$. Thus her mission is to learn a good model $\widehat{f}_t$ which minimizes $\mathcal{L}(P_t,f,\widehat{f}_t)$, utilizing the information contained in pre-trained models $\{\widehat{f}_i\}_{i=1}^c$.

This problem seems easy at the first glance. A naive reasoning is:  since all the solved tasks share the same rule function $f$, and each $\widehat{f}_i$ is a low-error estimate of $f$, any of them is a good candidate for $\widehat{f}_t$. However, this is not the case because no assumption between $P_i$ and $P_t$ has been made so far. In an extreme case, the support of $P_t$ may not be covered by the union support of $P_i$, therefore there exist areas where all $\widehat{f}_i$'s can fail.

Put it in a concrete example. The global rule function is a 4-class classifier $f:\X \rightarrow \{a,b,c,d\}$. There are two providers equipped with very ``unlucky'' distribution. One's local dataset only contains points with two classes $\{a,b\}$, and another only sees points labeled $\{b,c\}$. They learn zero-error local classifiers $\{\widehat{f}_1,\widehat{f}_2\}$, which are perfect for their own task and uploaded to the public pool. Then in the deployment phase facing current task $T_t$, suppose all points drawn from $P_t$ are actually labeled class $d$ according to $f$. In this unfortunate case, both pre-trained models $\{\widehat{f}_1,\widehat{f}_2\}$ suffer from 100\% error on $T_t$.

The above example demonstrates that it is difficult to have a low-risk model $\widehat{f}_t$ on the current task without making any assumptions on $P_t$. To this end, we propose two realistic assumptions to model relationships between the current and solved tasks.

\textbf{Task-recurrent assumption}: The first type of assumption is that the distribution of the current task matches one of the solved tasks. The current task $T_t$ is said to be task-recurrent from the solved tasks $\{T_i\}_{i=1}^c$ if there exists $i\in [c]$, such that $P_t = P_i$. 

\textbf{Instance-recurrent assumption}: The second type of assumption is that the distribution of the current task is a convex mixture of solved tasks, i.e. $P_t = \sum_{i=1}^c w_i P_i$, where $\bm{w}=(w_1,\cdots,w_c)\in \Delta^c$ lies in a unit simplex. 

The second assumption is weaker as task-recurrent is a special case for instance-recurrent by setting $\bm{w}$ at a vertex of the unit simplex. However, if we are told that the first assumption holds a priori, it is expected to achieve better performance on the current task.

\subsection{Upload Phase}
In this section, we describe how to compute the reduced KME specification to summarize provider $i$'s local dataset $S_i$ in the upload phase. To lighten the notations, we focus on one provider and temporarily drop the subscript $i$. 

Given a local dataset $S=\{(x_n,y_n)\}_{n=1}^N$, where $x_n \sim P$. Now we use empricial KME to map the empirical distribution defined by $X=\{x_n\}_{n=1}^N$ with a valid kernel function $k$. The empirical KME is $\widehat{\mu_k}(P_X)$ as defined in \eqref{eq:KME_empirical}.

Then our mission is to find the reduced set minimizing \eqref{eq:rs_obj}. Denote $\bm{\beta}=(\beta_1,\cdots,\beta_M)$ and $Z=\{z_1,\cdots,z_M\}$, expanding \eqref{eq:rs_obj} gives
\begin{equation}\label{eq:rs_obj_expand}
     F(\bm{\beta},Z)=\sum_{n,m=1}^N \frac{1}{N^2} k(x_n,x_m)
     +\sum_{n,m=1}^M \beta_n \beta_m k(z_n,z_m) -2 \sum_{n=1}^N \sum_{m=1}^M \frac{\beta_m}{N} \ k(x_n,z_m).
\end{equation}
We adopt the alternating optimization to minimize \eqref{eq:rs_obj_expand}.

\textbf{Fix $Z$ update $\bm{\beta}$.} Suppose vectors in $Z$ are fixed, setting $\frac{\partial F(\bm{\beta},Z)}{\partial \bm{\beta}}=0$ obtains the closed-form solution of $\bm{\beta}$:
\begin{equation} \label{eq:rebeta}
\bm{\beta}=K^{-1}C,
\end{equation}
where
\begin{equation*}
K \in \mathbb{R}^{N\times M}, K_{nm}=k(z_n,z_m), C \in \mathbb{R}^{N\times 1}, C_n=\frac{1}{N}\sum_{m=1}^M k(z_n,x_m).
\end{equation*}

\textbf{Fix $\bm{\beta}$ update $Z$.} When $\bm{\beta}$ is fixed, $\{z_1,\cdots,z_M\}$ in $Z$ are independent in \eqref{eq:rs_obj_expand}, therefore we can iteratively run gradient descent on each $z_m$ as
\begin{equation} \label{eq:re_z}
    z_m^{(t)}=z_m^{(t-1)}-\eta  \frac{\partial F(\bm{\beta},Z)}{\partial z_m}.
\end{equation}

The optimization is summarized in Algorithm~\ref{alg:rs}. If the step size $\eta$ is small, the objective value $F(\bm{\beta},Z)$ will decrease monotonically at both steps, and finally converges.

After the optimization, each provider uploads her model $\widehat{f}_i$, paired with RKME specification $\Phi_i$ (represented by $\bm{\beta}$ and $Z$), into the learnware pool. Raw data examples are inaccessible after the construction by design. Differential privacy can be further ensured by applying techniques in \cite{Priave_Release}, which is an interesting issue but out of our scope. 

An illustration of this phase is presented in Fig.~\ref{fig:upload_phase}. In this illustration, 3 providers upload pre-trained binary classification models and computed RKMEs into the public learnware pool. They are unaware of each other, and their pre-trained models disagree on many areas. The RKMEs ($\Phi_1,\Phi_2,\Phi_3$) are score functions in the raw feature space (denoted by contours, deeper means higher), and also points in the RKHS (denoted by points in a cloud). There is no optimal way to ensemble these models, but the RKME specifications allow future users to appropriately reuse them in the deployment phase.

\begin{algorithm}[htb]
   \caption{Reduced KME Construction}
   \label{alg:rs}
\begin{algorithmic}[1]
\renewcommand{\algorithmicrequire}{\textbf{input:}}
\renewcommand{\algorithmicensure}{\textbf{output:}}
\REQUIRE~~\\
    Local dataset $X=\{x_n\}_{n=1}^N$, kernel fucntion $k$, size of reduced set $M$, iteration parameter $T$
\ENSURE~~\\
  Reduced KME $\Phi(\cdot)=\sum_{m=1}^{M}\beta_m k(z_m,\cdot)$
\renewcommand{\algorithmicrequire}{\textbf{procedure:}}
\REQUIRE~~\\
\STATE Initialize $z^{(0)}_m$ by running $k$-means on $X$, where $k=M$
\FOR{$t=1:T$}
\STATE Update $\bm{\beta}$ by \eqref{eq:rebeta}
\STATE Update each $z_m^{(t)}$ by \eqref{eq:re_z}
\ENDFOR
\end{algorithmic}
\end{algorithm}
\begin{figure*}[htb]
\centering
\includegraphics[width=\textwidth]{./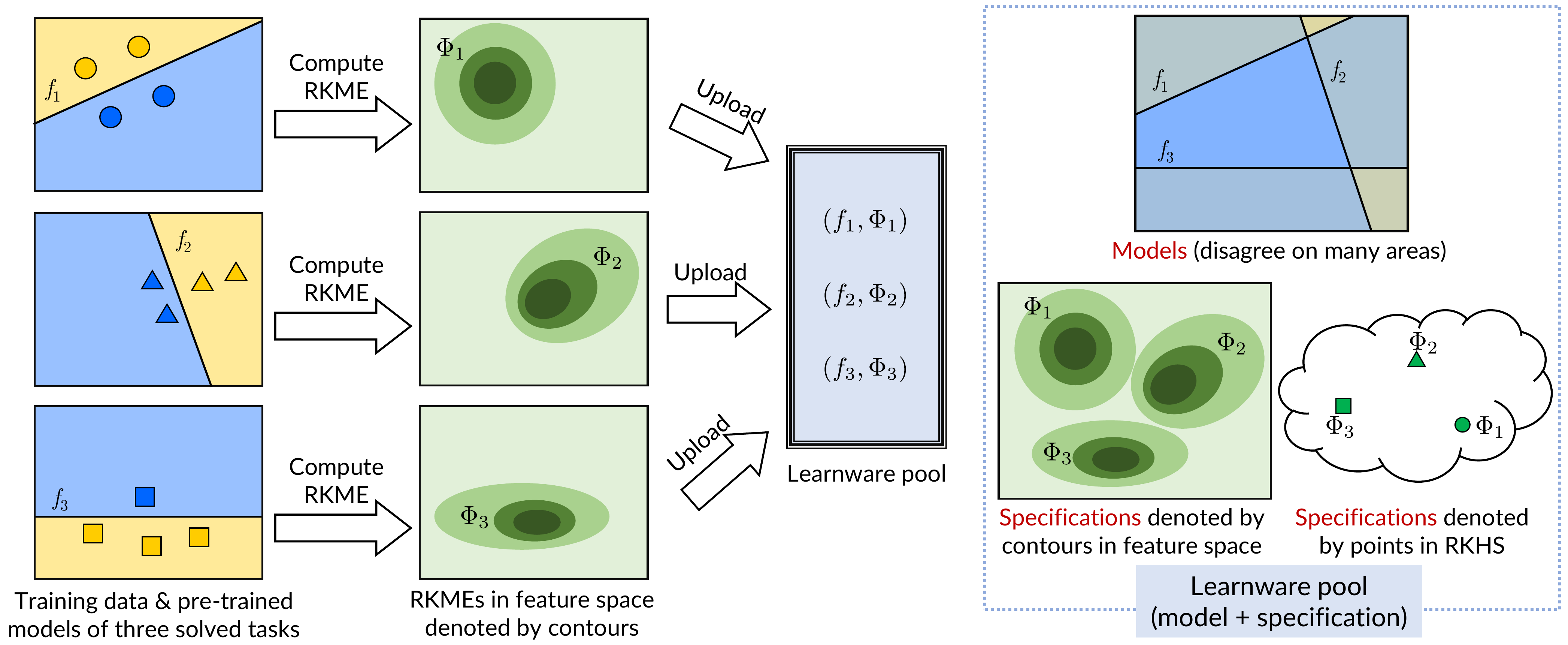}
\caption{An illustration of the upload phase.}\label{fig:upload_phase}
\end{figure*}

\subsection{Deployment Phase}
In this section, we describe how to use RKME to identify useful models in the learnware pool for the current task.  Algorithm~\ref{alg:deploy} shows the overall deployment procedure. As we mentioned in Section~\ref{sec:formulation}, the procedure treats two different recurrent assumptions separately. 

\begin{algorithm}[tb]
   \caption{Deployment Procedure}
   \label{alg:deploy}
\begin{algorithmic}[1]
\renewcommand{\algorithmicrequire}{\textbf{input:}}
\renewcommand{\algorithmicensure}{\textbf{output:}}
\REQUIRE~~\\
    Current task test data $X=\{x_n\}_{n=1}^{N}$, a pool of pre-trained models $\{\widehat{f}_i\}_{i=1}^c$, RKMEs $\{\Phi_i\}_{i=1}^c$
\ENSURE~~\\
  Prediction $Y=\{y_n\}_{n=1}^{N}$
\renewcommand{\algorithmicrequire}{\textbf{procedure:}}
\REQUIRE~~\\
\IF{task-recurrent assumption} 
\STATE $\Phi_t=\sum_{n=1}^{N}\frac{1}{N} k(x_n,\cdot)$
\STATE $i^*=\argmin_i \big\lVert \Phi_t-\Phi_i \big\rVert_{\mathcal{H}_k}^2$ \label{deploy:min_dist}
\STATE $Y=\widehat{f}_{i^*}(X)$ \label{deploy:task_re_predict}
\ENDIF
\IF{instance-recurrent assumption}
\STATE Estimate $\widehat{\bm{w}}$ as \eqref{eq:w_solution} \label{deploy:ira_start}
\STATE Initialize the mimic sample set $S=\emptyset$ \label{deploy:sampling_start}
\WHILE{$|S|$ is not big enough}
\STATE Sample a provider index $i$ by weight $\widehat{w}_i$ \label{deploy:pick_provider}
\STATE Sample an example $x$ by kernel herding as~\eqref{eq:herding_next} \label{deploy:herding}
\STATE $S=S\cup \{(x,i)\}$
\ENDWHILE \label{deploy:sampling_end}
\STATE Train a selector $g$ on mimic sample $S$
\FOR{$n=1:N$}
\STATE $i^*=g(x_n)$\label{deploy:phi_x_start} 
\STATE $y_n=\widehat{f}_{i^*}(x_n)$ \label{deploy:phi_x_end}
\ENDFOR \label{deploy:ira_end}
\ENDIF 
\end{algorithmic}
\end{algorithm}
\subsubsection{Task-recurrent assumption}
When the task-recurrent assumption holds, which means the current distribution matches one of the distributions solved before, our goal is to find out which one fits the best. In Line \ref{deploy:min_dist} of Algorithm \ref{alg:deploy}, we measure the RKHS distance between the testing mean embedding and reduced embeddings in the pool, and figure out the model which was trained on the closest data distribution. Then in Line \ref{deploy:task_re_predict}, we apply the matching model $\widehat{f}_{i^*}$ to predict all the points.

\subsubsection{Overview of instance-recurrent assumption}
When the instance-recurrent assumption holds, which means no single pre-trained model can handle all the testing points, our goal is to determine which model is the most suitable for each instance. The general idea is that we ``mimic'' the test distribution by weighting existing distributions first, then ``recover'' enough data points and learn a model selector on them. Finally, the selector predicts the suitable model $\widehat{f}_i$ for each testing point.

\subsubsection{Estimate mixture weights}
Let us see how to estimate the mixture weights first. By instance-recurrent assumption, we have $P_t = \sum_{i=1}^c w_i P_i$, which implies 
\begin{equation}\label{eq:mixture_model}
\mu_k(P_t) = \sum\nolimits_{i=1}^c w_i \mu_k(P_i).
\end{equation}
Let $\{x_n\}_{n=1}^N\sim P_t$ be the examples from the current task. 
To estimate the weights $\widehat{\bm{w}}$, we aim to minimize:
\begin{equation*}\label{eq:mixture-dist-loss}
\min_{\bm{w}}\Big \lVert \frac{1}{N}\sum_{n=1}^N k(x_n,\cdot)- \sum_{i=1}^c w_i \Phi_i(\cdot) \Big \rVert^2_{\H_k},
\end{equation*}
which is similar to \eqref{eq:rs_obj}, thus the solution $\widehat{\bm{w}}$ is similar to \eqref{eq:rebeta}:
\begin{equation}\label{eq:w_solution}
\widehat{\bm{w}} = H^{-1}C,
\end{equation}
where
\begin{equation*}
  H\in \mathbb{R}^{c\times c},H_{ij}=\langle \Phi_i, \Phi_j\rangle, C\in \mathbb{R}^{c\times 1},C_i=\frac{1}{N}\sum_{n=1}^N \Phi_i(x_n).
\end{equation*}

The $\widehat{\bm{w}}$ measures the weight of each provider's distribution. Given the weights, we are able to unbiasedly pick a provider $i$ in Line~\ref{deploy:pick_provider} of Algorithm~\ref{alg:deploy}, which is the first step to mimic the testing distribution.

\subsubsection{Sampling from RKME}
This subsection explains how to implement Line~\ref{deploy:herding} of Algorithm~\ref{alg:deploy}, i.e., sample examples from the distribution $P_i$ with the help of RKME $\Phi_i$, by applying kernel herding techniques~\citep{kernel_herding}. 

For ease of understanding, we temporarily drop the subscript $i$, slightly abuse the notation $t$ as the iteration number here, and rewrite the iterative herding process in~\cite{herding_thesis} via our notations:
\begin{equation}\label{eq:herding_next}
x_{T+1}=\begin{cases}
\argmax_{x\in \X} \Phi(x), &\text{if $T=0$} \\
\argmax_{x\in \X} \Phi(x)-\frac{1}{T+1}\sum_{t=1}^{T}k(x_t,x), &\text{if $T\geq 1$}.
\end{cases}
\end{equation}
where $x_{T+1}$ is the next example we want to sample from $P$ when $\{x_t\}_{t=1}^T$ have been already sampled. And Proposition 4.8 in~\cite{herding_thesis} shows the following error $\mathcal{E}_T$ will decrease as $O(1/T)$:
\begin{equation*}
  \mathcal{E}_T=\Big\lVert \frac{1}{T}\sum_{t=1}^T k(x_t,\cdot) -\Phi\Big\rVert^2_{\H_k}.
\end{equation*}

Therefore, by iteratively sampling as in~\eqref{eq:herding_next}, we will eventually have a set of examples drawn from $P_i$. Combined with unbiased sampling from providers (Line~\ref{deploy:pick_provider} of Algorithm~\ref{alg:deploy}), a labeled sample set $S\sim \sum \widehat{w}_i P_i$ is constructed.
\subsubsection{Final predictions and illustrations}
When all the previous steps are ready, it is quite easy to make the final prediction. The user will train a selector $g:\X\rightarrow \{1,\cdots,c\}$ on $S$ to predict which pre-trained model in the pool should be selected. The selector can be similar to pre-trained models except that its output space is the index of providers. The final prediction for a test instance $x$ will be $\widehat{f}_{i^*}(x)$, where $i^*=g(x)$ is the selected index.
\begin{figure*}[htb]
\centering
\includegraphics[width=\textwidth]{./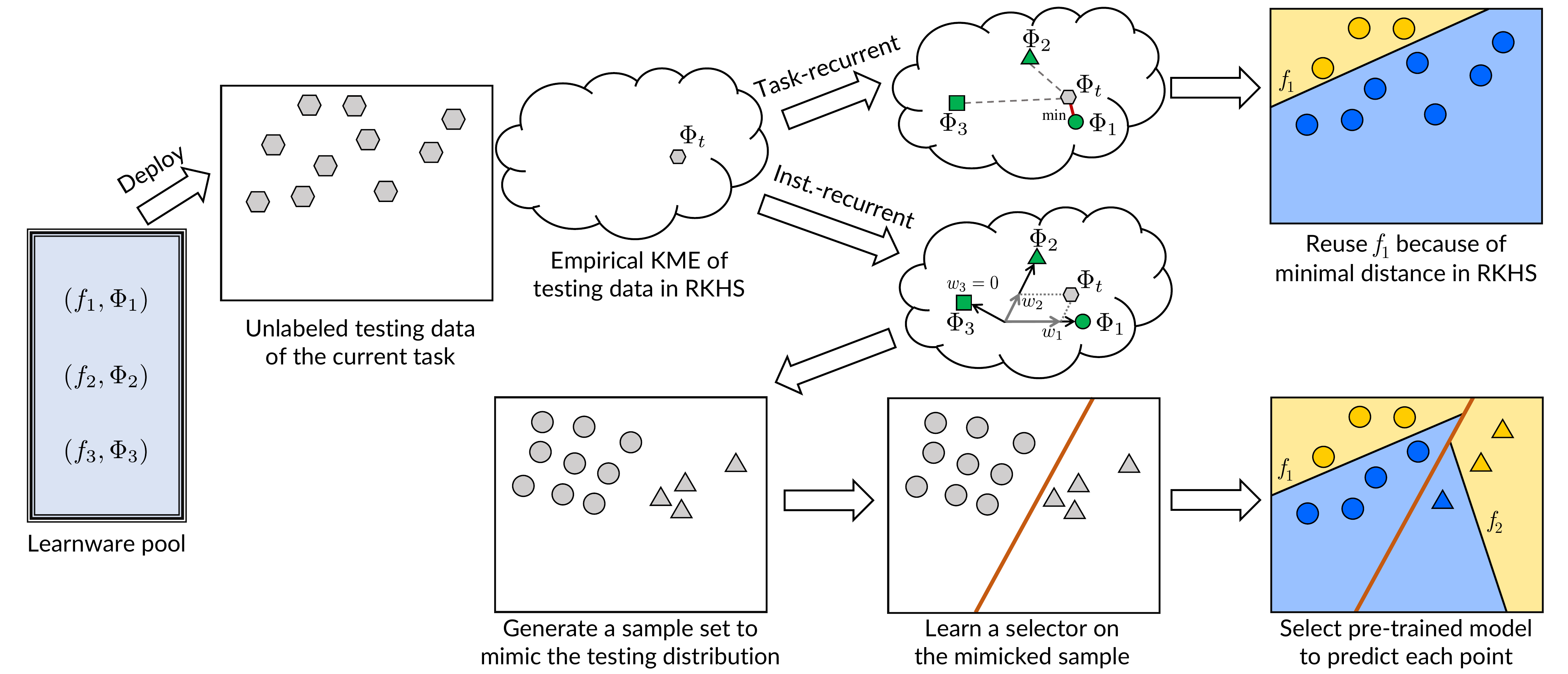}
\caption{An illustration of the deployment phase.}\label{fig:deployment_phase}
\end{figure*}

An illustration of the deployment phase including both task-recurrent and instance-recurrent assumptions is presented in Fig.~\ref{fig:deployment_phase}. It is easier to see the differences between assumptions in the RKHS plotted as a cloud. If task-recurrent, we are finding the closest RKME (which is $\Phi_1$ in that cloud) and only one pre-trained model will be used. If instance-recurrent, we are finding a combination of RKMEs (which is $\Phi_t=w_1 \Phi_1+w_2 \Phi_2$ in that cloud). Each $\Phi_i$ is like a basis in the reproducing kernel Hilbert space, and instance-recurrent assumption is actually saying that $\Phi_t$ can be decomposed by these bases. In that example, because $w_1>w_2$, more circles are generated than triangles in the mimicked sample. There is no square because $w_3=0$. The learned selector shows we should use $f_1$ in the left half and $f_2$ in the right half.

Since we are reusing pre-trained models without modifying them on the current task, our framework accepts any types of model from providers. They can be deep neural classifiers, SVMs with different kernel functions, or gradient boosted tree regressors. As long as the input spaces are identical, these pre-trained models can even have different output spaces.

\section{Theory}
In this section, theoretical results are presented to rigorously justify the reusability of pre-trained models by using RKME specifications via our proposed way, either in task-recurrent assumption or in instance-recurrent assumption.

\subsection{Task-recurrent}
Here we introduce useful propositions regarding MMD and then show the guarantees for the task-recurrent assumption. For simplicity, we omit the subscript $k$ in $\mu_k$ and represent $\mu_k(P)$ as $\mu_P$.

\begin{Proposition}[Upper bound for empirical MMD]\label{prop:mmd-bound}
\textnormal{\cite[Theorem 7]{gretton2012kernel}} Let $P_i$ and $P_t$ be Borel probability measure defined in topological space $\X$ and the associated kernel function is bounded: $0\leq k(x,x')\leq K, \forall x,x'\in \X$. If $P_i = P_t$, then, with probability at least $1-2\delta$, the (biased) empirical \textnormal{MMD} in \eqref{eq:rs_obj} is bounded by:
\[
\frac{1}{2}\textnormal{MMD}_b(\widehat{P_i}, \widehat{P_t}) < \sqrt{\frac{K}{m}}+ \sqrt{\frac{K}{n}} +\sqrt{\frac{K(m+n)\log \frac{1}{\delta}}{2mn}}
\]
for arbitrary small $\delta>0$. And this bound converges to zero when $m,n\to \infty$.
\end{Proposition}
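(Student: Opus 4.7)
The plan is to combine McDiarmid's bounded differences inequality with an estimate of $\E[\textnormal{MMD}_b]$.

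First, I would rewrite $\textnormal{MMD}_b(\widehat{P_i},\widehat{P_t})=\lVert\widehat{\mu}_i-\widehat{\mu}_t\rVert_{\H_k}$, where $\widehat{\mu}_i,\widehat{\mu}_t$ are the empirical kernel mean embeddings of the two samples (of sizes $m$ and $n$, respectively). Under the null $P_i=P_t=P$, the triangle inequality in $\H_k$ reduces the task to bounding how well each empirical mean approximates $\mu_P$:
\[
\textnormal{MMD}_b \;\leq\; \lVert\widehat{\mu}_i-\mu_P\rVert_{\H_k}+\lVert\widehat{\mu}_t-\mu_P\rVert_{\H_k}.
\]

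Second, I would view $\textnormal{MMD}_b$ as a function of the $m+n$ independent draws. Since $k(x,x')\leq K$ implies $\lVert k(x,\cdot)\rVert_{\H_k}\leq\sqrt{K}$, swapping any one point in the first sample changes $\widehat{\mu}_i$ by at most $2\sqrt{K}/m$ in RKHS norm, and by $2\sqrt{K}/n$ in the second sample. McDiarmid's bounded differences inequality then gives
\[
\P\bigl(\textnormal{MMD}_b-\E[\textnormal{MMD}_b]>\epsilon\bigr)\;\leq\;\exp\!\Bigl(-\tfrac{\epsilon^2\,mn}{2K(m+n)}\Bigr),
\]
and setting the right side to $\delta$, applied two-sidedly to account for the stated $1-2\delta$ confidence, yields the deviation $\epsilon=\sqrt{2K(m+n)\log(1/\delta)/(mn)}$.

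Third, I would bound $\E[\textnormal{MMD}_b]$. A direct second-moment calculation using sample independence together with $\E[k(x,x)]\leq K$ gives $\E\lVert\widehat{\mu}_i-\mu_P\rVert_{\H_k}^2\leq K/m$, and Jensen's inequality then yields $\E\lVert\widehat{\mu}_i-\mu_P\rVert_{\H_k}\leq\sqrt{K/m}$; summing with the companion bound for the second sample, and loosening via a Rademacher symmetrization step as in Gretton et al., produces $\E[\textnormal{MMD}_b]\leq 2\sqrt{K/m}+2\sqrt{K/n}$. Adding this to the concentration bound and halving both sides recovers the stated inequality.

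The main subtle point is the expectation bound: the bare triangle-inequality route already gives the tighter constant $1$ in front of $\sqrt{K/m}+\sqrt{K/n}$, so reproducing the exact constant $2$ in the cited Gretton bound is precisely where the symmetrization refinement enters. The McDiarmid verification of the bounded-differences constants and the second-moment computation are both routine.
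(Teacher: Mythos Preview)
The paper does not supply its own proof of this proposition; it is simply quoted from \cite[Theorem 7]{gretton2012kernel}. Your plan---McDiarmid's bounded-differences inequality for the fluctuation term plus a Jensen/second-moment bound on $\E[\textnormal{MMD}_b]$---is exactly the argument Gretton et al.\ use, so there is nothing to compare.

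One small correction to your last paragraph: you have the logic inverted. The direct triangle-inequality route already yields $\E[\textnormal{MMD}_b]\leq\sqrt{K/m}+\sqrt{K/n}$, which is \emph{tighter} than the constant $2$ appearing in the cited bound. The factor $2$ in Gretton et al.\ arises because their proof passes through Rademacher symmetrization, which \emph{loosens} the expectation bound by a factor of two; it is not a refinement. So your direct estimate already implies the stated inequality after halving, and no additional step is needed to ``recover'' the constant~$2$. Also, the $1-2\delta$ (rather than $1-\delta$) confidence in the statement does not come from applying McDiarmid two-sidedly---only the upper tail matters here---but is simply how the original theorem is phrased; a one-sided application at level $\delta$ suffices.
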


We then know that when task-recurrent assumption is satisfied, the empirical MMD is bounded by a small value. Using such an idea, we further bound the empirical estimation of the task-recurrent case.

\begin{Theorem}[Task-recurrent bound]\label{thm:task-recurrent} Assume $T_t$ is task-recurrent, i.e. $\exists i, P_t = P_i$. The learned model $\widehat{f}_i$ from each solved task satisfy 
\eqref{eq:small-loss}.
Assume the loss function $L(\widehat{f}_i(x),f(x)) \in \H_k$, and upper bounded $|L(\widehat{f}_i(x),f(x))|<K$ $\forall i$. The empirical \textnormal{MMD} between distribution $P_i$ and current distribution $P_t$ can be estimated from
\[
\widehat{\textnormal{MMD}}_b^{(i)} = \Big\lVert\sum_{n=1}^{N}\frac{1}{N} k(x_n,\cdot)-\Phi_i(\cdot) \Big\rVert_{\mathcal{H}_k}^2    
\]
where $x_n\sim P_t$.
Further assume that the minimum empirical \textnormal{MMD}  is bounded above: 
$\min_i \widehat{\textnormal{MMD}}_b^{(i)} \leq \eta$. 
The selected model for current task is
$\widehat{f}_t = \widehat{f}_{i^*}$ s.t.
\[
i^{\ast}=\argmin_i \widehat{\textnormal{MMD}}_b^{(i)}.
\]
The finite sample loss satisfies:
\begin{equation}\label{eq:task-recurrent-bound}
   \widehat{\mathcal{L}}(P_t,f,\widehat{f}_t)=\sum_{x_n\sim P_t }\big[L(\widehat{f}_t(x_n),f(x_n))\big]\leq \epsilon + K\eta + O(m,n)
\end{equation}
where $O(m,n)=O(1/\sqrt{n}+1/\sqrt{m}) \to 0$ as $m,n \to \infty$. 
\end{Theorem}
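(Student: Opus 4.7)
The plan is to bridge the empirical loss on $P_t$ and the expected loss on $P_{i^{\ast}}$ by expressing both as RKHS inner products with mean embeddings, then controlling their difference by an MMD. The three ingredients are the reproducing property, Cauchy--Schwarz in $\H_k$, and a triangle inequality that isolates the criterion the selector actually minimises from the sampling fluctuations controlled by Proposition~\ref{prop:mmd-bound}.

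First, since $L(\widehat{f}_{i^{\ast}}(\cdot), f(\cdot)) \in \H_k$, the reproducing property gives
\[
\widehat{\mathcal{L}}(P_t, f, \widehat{f}_{i^{\ast}}) = \Big\langle L(\widehat{f}_{i^{\ast}}(\cdot), f(\cdot)),\, \tfrac{1}{N}\sum_{n=1}^{N} k(x_n,\cdot)\Big\rangle_{\H_k} \quad\text{and}\quad \mathcal{L}(P_{i^{\ast}}, f, \widehat{f}_{i^{\ast}}) = \langle L, \mu_{P_{i^{\ast}}}\rangle_{\H_k}.
\]
Subtracting, using \eqref{eq:small-loss} on the second term, and applying Cauchy--Schwarz yields
\[
\widehat{\mathcal{L}}(P_t, f, \widehat{f}_{i^{\ast}}) \leq \epsilon + \|L\|_{\H_k}\,\big\|\widehat{\mu}_{P_t} - \mu_{P_{i^{\ast}}}\big\|_{\H_k},
\]
and the pointwise bound $|L|<K$ together with the bounded kernel lets us absorb $\|L\|_{\H_k}$ into a constant of the order of $K$.

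Second, the selector's criterion controls only the distance to $\Phi_{i^{\ast}}$, not to the true embedding $\mu_{P_{i^{\ast}}}$, so I would split
\[
\big\|\widehat{\mu}_{P_t} - \mu_{P_{i^{\ast}}}\big\|_{\H_k} \leq \big\|\widehat{\mu}_{P_t} - \Phi_{i^{\ast}}\big\|_{\H_k} + \big\|\Phi_{i^{\ast}} - \widehat{\mu}_{P_{i^{\ast}}}\big\|_{\H_k} + \big\|\widehat{\mu}_{P_{i^{\ast}}} - \mu_{P_{i^{\ast}}}\big\|_{\H_k}.
\]
The first term is at most $\sqrt{\eta}$ by the hypothesis $\widehat{\textnormal{MMD}}_b^{(i^{\ast})} \leq \eta$; the second is the reduction error driven to near zero by Algorithm~\ref{alg:rs}; the third is $O(1/\sqrt{m})$ by the standard empirical-KME concentration cited in Section~\ref{sec:background} that also underlies Proposition~\ref{prop:mmd-bound}. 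A parallel $O(1/\sqrt{n})$ contribution appears if one wishes to restate the result with $\mu_{P_t}$ in place of $\widehat{\mu}_{P_t}$. Combining these pieces produces the inequality in \eqref{eq:task-recurrent-bound}, up to the authors' convention of grouping $\sqrt{\eta}$ into the symbol $\eta$.

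A minor conceptual point is that task-recurrence enters the argument only to guarantee that $\min_i \widehat{\textnormal{MMD}}_b^{(i)}$ is indeed small: at the true matching index Proposition~\ref{prop:mmd-bound} yields $O(1/\sqrt{m}+1/\sqrt{n})$, hence a valid $\eta$. The bound for the selected $i^{\ast}$ is otherwise oblivious to which index is ``truly'' recurrent, because Cauchy--Schwarz is routed through $\mu_{P_{i^{\ast}}}$, not $\mu_{P_t}$. The chief obstacle I anticipate is justifying the step from the pointwise bound $|L|<K$ to an RKHS norm bound on $L(\widehat{f}_{i^{\ast}}(\cdot),f(\cdot))$; depending on the paper's conventions this may need a brief remark using $k(x,x)\leq K$ and the canonical feature-map representation, or a mild strengthening of the hypothesis to a direct bound on $\|L\|_{\H_k}$.
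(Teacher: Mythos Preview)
Your proposal is correct and follows essentially the same route as the paper: represent the loss as an RKHS inner product via the reproducing property, apply Cauchy--Schwarz, and control the resulting embedding distance by a triangle inequality that isolates the selector's criterion from sampling fluctuations. The differences are cosmetic. The paper opens with a case split on whether $i^{\ast}$ equals the true recurrent index $j$, and in the nontrivial case routes through the \emph{empirical} embedding $\widehat{\mu}_{P_{i^{\ast}}}$, then invokes $P_t=P_j$ and Proposition~\ref{prop:mmd-bound} to bound $\|\widehat{\mu}_{P_j}-\widehat{\mu}_{P_t}\|$; you instead route through the \emph{population} embedding $\mu_{P_{i^{\ast}}}$ and avoid the case split entirely, which is cleaner and makes explicit your observation that task-recurrence is only needed to certify that $\eta$ is small. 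You also separate out the reduced-set error $\|\Phi_{i^{\ast}}-\widehat{\mu}_{P_{i^{\ast}}}\|$, which the paper silently absorbs by identifying $\Phi_i$ with $\widehat{\mu}_{P_i}$ in its proof.

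The two caveats you flag are real and present in the paper's own argument: the paper writes $K\eta$ where $K\sqrt{\eta}$ is what the squared-norm definition of $\widehat{\textnormal{MMD}}_b^{(i)}$ actually gives, and it passes from the sup-norm bound $|L|<K$ to $\|L_i\|_{\H_k}\leq K$ without comment. Your anticipation of this last point is well placed; the paper does not justify it, so treating it as a strengthened hypothesis on $\|L\|_{\H_k}$ is the honest reading.
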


\begin{spacelessprf}
When task-recurrent assumption holds, our method either correctly identifies the recurrent pre-trained model or not. Let $j$ be the correct recurrent index. If $i^{\ast} = j$,  the result follows directly from \eqref{eq:small-loss}. If $i^{\ast} = i \neq j$, the loss function is
$$
L(\widehat{f}_i(x), f(x)) = L_i(x) = \langle L_i, k(x,\cdot) \rangle.
$$
We then can represent the error of each model in the form of KME. For the current task $T_t$:
$$\mathds{E}_{x\sim P_t } \big[L_t(x)\big] = \langle L_t, \mu_{P_t}\rangle = \langle L_i, \mu_{P_j}\rangle $$
since $P_t = P_j$ and the selected model $\widehat{f}_t = \widehat{f}_i$. However, the correct matching should be $\widehat{f}_t = \widehat{f}_j$. Hence, we are applying model $\widehat{f}_i$ on distribution $P_j$. The empirical loss in $T_t$ is:
$$
|\langle L_i, \widehat{\mu}_{P_j}\rangle| \leq 
\underbrace{|\langle L_i, \widehat{\mu}_{P_j}\rangle - \langle L_i, \widehat{\mu}_{P_i}\rangle|}_{(A)} + \underbrace{|\langle L_i, \widehat{\mu}_{P_i}\rangle|}_{(B)}
$$
We then bound $(A)$ and $(B)$ separately. 
$$(B)\leq |\langle L_i, \widehat{\mu}_{P_i}\rangle|+ |\langle L_i, \widehat{\mu}_{P_i} - \widehat{\mu}_{P_i}\rangle|\leq \epsilon + O(1/\sqrt{n})$$ 
by \eqref{eq:small-loss} and convergence rate for empirical \textnormal{MMD}. 
$$(A) = |\langle L_i, \widehat{\mu}_{P_j} - \widehat{\mu}_{P_i}\rangle| \leq \|L_i\|\|\widehat{\mu}_{P_j} - \widehat{\mu}_{P_i} \|\leq K \eta + O(m,n)$$
To see the bound for empirical embeddings, $$\|\widehat{\mu}_{P_j} - \widehat{\mu}_{P_i} \| \leq  \underbrace{\|\widehat{\mu}_{P_t} - \widehat{\mu}_{P_i}}_{\leq \eta} \| + \underbrace{\|\widehat{\mu}_{P_j} - \widehat{\mu}_{P_t} \| }_{=O(m,n)}$$
the first term is by assumption and the second term is because $P_t=P_i$:
\begin{equation*}
\|\widehat{\mu}_{P_j} - \widehat{\mu}_{P_t}\| \leq\|\widehat{\mu}_{P_j} - \mu_{P_j} \|+\|\widehat{\mu}_{P_t} - \mu_{P_t} \| = O\Big(\frac{1}{\sqrt{m}}+\frac{1}{\sqrt{n}}\Big),
\end{equation*}
which completes the proof.
\end{spacelessprf}
The theorem shows that, in the task-recurrent setting, the test error from our procedure is bounded by the solved task error $\epsilon$ and the approximation error $\eta$ of empirical KME. 
\subsection{Instance-recurrent}
In this subsection, we present the guarantee of our proposal when instance-recurrent assumption $P_t=\sum w_i P_i$ holds. Our analysis is based on the following four steps:
\begin{enumerate}
  \item The estimation of $\{\widehat{w}_i\}$ is close to the true mixture weight $\{w_i\}$.
  \item The quality of the mimicked sample set generated by kernel herding is good.
  \item The error of the learned selector $g$ is bounded.
  \item The error of our estimated predictor $\widehat{f}_t$ is bounded.
\end{enumerate}
\begin{Lemma}[$w$-estimation bound]\label{lem:w_estimation}
Consider the weights estimation procedure stated in~\eqref{eq:w_solution}: $\widehat{\bm{w}} = H^{-1}C,$
where
\begin{equation*}
  H\in \mathbb{R}^{c\times c},H_{ij}=\langle \Phi_i, \Phi_j\rangle, C\in \mathbb{R}^{c\times 1},C_i=\frac{1}{N}\sum_{n=1}^N \Phi_i(x_n),
\end{equation*}
and consider the population embedding $\mu_k(P_t)$ for $\widetilde{\bm{w}} = H^{-1}\widetilde{C}$
where
$$\widetilde{C}\in \mathbb{R}^{c\times 1},\widetilde{C}_i=\langle \Phi_i, \mu_k(P_t) \rangle.$$
Assume $H$ is non-degenerate, i.e. the smallest eigenvalue is bounded below by a positive real number $\lambda$. Then,
$$\|\mu_k(P_t) - \sum_{i=1}^c \widehat{w}_i\Phi_i(\cdot) \| = O(\frac{1}{\sqrt{M}} + \frac{1}{\sqrt{N}})$$
\end{Lemma}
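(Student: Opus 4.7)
The plan is to control the target quantity via a triangle inequality that separates the \emph{sampling noise} in the weights (coming from using only $N$ test points) from the \emph{reduced-set approximation error} (coming from using $\Phi_i$ in place of $\mu_k(P_i)$):
\begin{equation*}
\Big\|\mu_k(P_t) - \sum_{i=1}^c \widehat{w}_i\Phi_i\Big\|_{\H_k} \leq \Big\|\mu_k(P_t) - \sum_{i=1}^c \widetilde{w}_i\Phi_i\Big\|_{\H_k} + \Big\|\sum_{i=1}^c(\widetilde{w}_i-\widehat{w}_i)\Phi_i\Big\|_{\H_k}.
\end{equation*}
Both $\widehat{\bm{w}}$ and $\widetilde{\bm{w}}$ share the same Gram matrix $H$ and differ only through $C$ versus $\widetilde{C}$, so I expect the second term to carry the $O(1/\sqrt{N})$ rate and the first term to carry the $O(1/\sqrt{M})$ rate.

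For the second term I would use $\widehat{\bm{w}}-\widetilde{\bm{w}} = H^{-1}(C-\widetilde{C})$. Each entry of $C-\widetilde{C}$ is a sample average of the bounded real-valued function $x\mapsto \Phi_i(x)=\langle\Phi_i,k(x,\cdot)\rangle_{\H_k}$ minus its expectation under $P_t$; since $\|\Phi_i\|_{\H_k}$ is finite and the kernel is bounded, Hoeffding's inequality gives $|C_i-\widetilde{C}_i|=O(1/\sqrt{N})$ with high probability, hence $\|C-\widetilde{C}\|_2 = O(1/\sqrt{N})$. Combining with $\|H^{-1}\|_{\mathrm{op}}\leq 1/\lambda$ from the non-degeneracy assumption yields $\|\widehat{\bm{w}}-\widetilde{\bm{w}}\|_2 = O(1/\sqrt{N})$, and then
\begin{equation*}
\Big\|\sum_{i=1}^c(\widetilde{w}_i-\widehat{w}_i)\Phi_i\Big\|_{\H_k}^2 = (\widetilde{\bm{w}}-\widehat{\bm{w}})^\top H(\widetilde{\bm{w}}-\widehat{\bm{w}}) \leq \|H\|_{\mathrm{op}}\cdot O(1/N) = O(1/N).
\end{equation*}

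The first term is where the $1/\sqrt{M}$ rate enters, and here the crucial observation is that $\widetilde{\bm{w}}=H^{-1}\widetilde{C}$ is exactly the normal-equations solution of $\min_{\bm{\alpha}\in\R^c}\|\mu_k(P_t) - \sum_i\alpha_i\Phi_i\|_{\H_k}^2$, i.e.\ the coefficients of the orthogonal projection of $\mu_k(P_t)$ onto $\mathrm{span}\{\Phi_1,\ldots,\Phi_c\}$. This lets me compare the projection residual against the residual achieved by the \emph{true} mixture weights $\bm{w}$:
\begin{equation*}
\Big\|\mu_k(P_t)-\sum_i\widetilde{w}_i\Phi_i\Big\|_{\H_k} \leq \Big\|\mu_k(P_t)-\sum_i w_i\Phi_i\Big\|_{\H_k} = \Big\|\sum_i w_i\bigl(\mu_k(P_i)-\Phi_i\bigr)\Big\|_{\H_k} \leq \sum_i w_i\,\|\mu_k(P_i)-\Phi_i\|_{\H_k},
\end{equation*}
where the equality uses the instance-recurrent identity $\mu_k(P_t)=\sum_i w_i\mu_k(P_i)$. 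Each summand is precisely the RKME approximation error for provider $i$, which decomposes into the empirical-KME convergence (rate $O(1/\sqrt{M})$ from Section~2.1) and the reduced-set construction error (Section~2.2); since $\bm{w}\in\Delta^c$ the convex combination does not inflate the bound, and the whole sum is $O(1/\sqrt{M})$.

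I expect the main obstacle to be the concentration step rather than the algebra: bounding $|C_i-\widetilde{C}_i|$ cleanly requires a uniform bound on $\Phi_i(x)$ over $x\in\X$, and while this follows from boundedness of $k$ and finiteness of $\|\Phi_i\|_{\H_k}$, the dependence of the hidden constants on the reduced-set coefficients $\bm{\beta}$ and on $c$ must be tracked so that the stated $O(\cdot)$ rate is honest. The projection argument for $\widetilde{\bm{w}}$ is almost automatic once the normal equations are recognized, but it relies essentially on the assumption $\lambda>0$, which both makes $H$ invertible and certifies that $\{\Phi_i\}_{i=1}^c$ is linearly independent in $\H_k$.
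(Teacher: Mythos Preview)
Your proposal is correct and uses the same triangle-inequality decomposition as the paper, separating the $\widetilde{\bm w}$-residual from the $\widehat{\bm w}-\widetilde{\bm w}$ perturbation, with the second term handled the same way (the paper writes $|C_i-\widetilde C_i|=|\langle\Phi_i,\widehat{\mu_k}(P_t)-\mu_k(P_t)\rangle|\le K\|\widehat{\mu_k}(P_t)-\mu_k(P_t)\|=O(1/\sqrt N)$, which is the same concentration you invoke via Hoeffding).

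Where you genuinely diverge is the first term. The paper bounds $\|\mu_k(P_t)-\sum_i\widetilde w_i\Phi_i\|$ by first \emph{explicitly} estimating $\widetilde{\bm w}-\bm w$: it introduces $\widetilde H_{ij}=\langle\Phi_i,\mu_k(P_j)\rangle$, observes $\widetilde{\bm w}=H^{-1}\widetilde H\bm w$, and uses $\|H-\widetilde H\|_F=O(1/\sqrt M)$ together with $\|H^{-1}\|\le\lambda^{-1}$ to get $\widetilde{\bm w}=\bm w+O(1/\sqrt M)$, after which the term is split once more into $\sum_i w_i(\mu_k(P_i)-\Phi_i)$ plus $\sum_i(w_i-\widetilde w_i)\Phi_i$. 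Your route sidesteps all of this by recognizing $\widetilde{\bm w}=H^{-1}\widetilde C$ as the normal-equations solution of $\min_{\bm\alpha}\|\mu_k(P_t)-\sum_i\alpha_i\Phi_i\|$, so the residual at $\widetilde{\bm w}$ is automatically no larger than the residual at $\bm w$, which is just $\|\sum_i w_i(\mu_k(P_i)-\Phi_i)\|\le\max_i\|\mu_k(P_i)-\Phi_i\|=O(1/\sqrt M)$. Your argument is shorter and does not need the auxiliary matrix $\widetilde H$ or the perturbation step; the paper's argument, on the other hand, yields the extra information $\|\widetilde{\bm w}-\bm w\|=O(1/\sqrt M)$, which could be useful elsewhere but is not needed for the lemma as stated.
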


\begin{proof}
The proof starts from the instance-recurrent assumption where $\mu_k(P_t) = \sum_i w_i \mu_k(P_i) $. Rewrite 
\begin{align}
\|\mu_k(P_t) - \sum_{i=1}^c \widehat{w}_i\Phi_i \| 
\leq& \|\sum_i w_i \mu_k(P_i) - \sum_i \widetilde{w}_i \Phi_i \| \label{eq:term1} \\
&+\|\sum_i \widetilde{w}_i \Phi_i - \sum_i \widehat{w}_i \Phi_i\| \label{eq:term2} \\
=&O(\frac{1}{\sqrt{M}} + \frac{1}{\sqrt{N}})
\end{align}
\eqref{eq:term1} goes to $0$ since $\Phi_i$ is a $O(1/\sqrt{M})$-consistent estimator of $\mu_k(P_i)$.
Let $\widetilde{H}\in \mathbb{R}^{c\times c},\widetilde{H}_{ij}=\langle \Phi_i, \mu_k(P_j)\rangle$. $\|H-\widetilde{H}\|_F\leq c^2 O(\frac{1}{\sqrt{M}})$. As $c \ll M$, we have
$\widetilde{w} = H^{-1}\widetilde{H}w = w + H^{-1}(\widetilde{H} -H)w = w + O(\frac{1}{\sqrt{M}})$. As $\Phi_i \in \H$ is bounded, $\eqref{eq:term1} =O(\frac{1}{\sqrt{M}})$.
$\eqref{eq:term2}=\|\sum_i (\widetilde{w}_i-\widehat{w}_i)\Phi_i\|\leq K\|\widetilde{\bm{w}}-\widehat{\bm{w}}\|_F = K\lambda^{-1}\|C-\widetilde{C}\|_F = K^2\lambda^{-1}\|\mu_k(P_t)- \frac{1}{N}\sum_n k(x_n,\cdot)\| =K^2\lambda^{-1} O(\frac{1}{\sqrt{N}}) $, where $K<\infty$ as bounded of RKHS, $\lambda^{-1}<\infty$ as $H$ is non-degenerate and 
$$\|\mu_k(P_t)- \frac{1}{N}\sum_n k(x_n,\cdot)\|=O(\frac{1}{\sqrt{N}})$$ 
as $\{x_n\} \sim P_t$.
By such construction, we are able to see the difference, in terms of Frobenius norm, of the weight vector learned from true embedding $\widetilde{\bm{w}}$ versus from empirical embedding $\widehat{\bm{w}}$. 
\end{proof}
To better understand the instance-recurrent case, for the mixture model in \eqref{eq:mixture_model}, we perceive the component assignment as a latent variable, $I$. Hence, by setting $P_i(x) = P(x|I=i)$ as the conditional distribution of $x$ given that $x$ comes from the $i$-th component,  we can write the distribution of test data as:
$$P_t(x) = \sum_i w_i P_i(x) = \sum_i P(x|i)\Pi(I=i) = \sum_i P(x, i)$$ 
where $\Pi(I=i)$ is the marginal distribution of component assignment variable $I$. This corresponds to partition weights $w_i$, where $\sum_i w_i = 1$. We call the $P(x,i)$ the joint distribution of random variable pair $(X, I)$.

\begin{Proposition}\label{prop:herding_rate}
\cite[Proposition 4]{kernel_herding}
Let $p$ be the target distribution, $T$ be the number of samples generated from kernel herding, and $\hat{p}$ be the empirical distribution from $T$ samples. For any $f\in \H$, the error $|\E[f]_p - \E[f]_{\hat{p}}| = O(T^{-1})$.   Moreover this condition holds uniformly, that $\sup_{\|f\|\neq 1}|\E[f]_p - \E[f]_{\hat{p}}| = O(T^{-1})$. Thus $\|\mu_p  - \mu{\hat{p}}\|
_{\H}= O(T^{-1})$.
\end{Proposition}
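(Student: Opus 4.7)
The plan is to collapse all three claims to a single RKHS-norm bound $\|\mu_p - \mu_{\hat{p}}\|_{\H} = O(1/T)$. By the reproducing property, $\E_p[f] - \E_{\hat{p}}[f] = \langle f, \mu_p - \mu_{\hat{p}}\rangle$ for every $f \in \H$, so Cauchy--Schwarz instantly delivers both the fixed-$f$ statement and the supremum over unit-norm $f$ once the RKHS-norm rate is in hand. Hence the entire argument reduces to analysing the herding iterates.

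Concretely, I would introduce the unnormalized residual $w_T := T\bigl(\mu_p - \hat{\mu}_T\bigr)$ with $\hat{\mu}_T = \frac{1}{T}\sum_{t=1}^T k(x_t,\cdot)$. The selection rule in~\eqref{eq:herding_next} is equivalent to choosing $x_{T+1}$ to minimize $\|w_T + \mu_p - k(x,\cdot)\|^2$, giving the recursion $w_{T+1} = w_T + \mu_p - k(x_{T+1},\cdot)$. Expanding yields
\[
\|w_{T+1}\|^2 = \|w_T\|^2 - 2\langle w_T,\, k(x_{T+1},\cdot) - \mu_p\rangle + \|k(x_{T+1},\cdot) - \mu_p\|^2.
\]
Boundedness of the kernel, say $k(x,x) \leq R^2$, controls the last term by $4R^2$, and greediness ensures $\langle w_T, k(x_{T+1},\cdot) - \mu_p\rangle \geq 0$, since the maximum of $\langle w_T, k(x,\cdot)\rangle$ over $x$ dominates its expectation under $p$, namely $\langle w_T, \mu_p\rangle$.

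The step that upgrades the naive $O(1/\sqrt{T})$ bound to $O(1/T)$ is a strictly positive linear decrease. Under the finite-dimensional discrete state-space hypothesis flagged in Section~\ref{sec:background}, $\mu_p$ lies at a positive margin $\epsilon > 0$ inside the marginal polytope $\mathrm{conv}\{k(x,\cdot) : x \in \X\}$, so every direction $w_T$ admits an extreme point with $\langle w_T, k(x_{T+1},\cdot) - \mu_p\rangle \geq \epsilon\|w_T\|$. Substituting turns the recursion into the scalar inequality $\|w_{T+1}\|^2 \leq \|w_T\|^2 - 2\epsilon\|w_T\| + 4R^2$, whose solutions are uniformly bounded. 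Dividing by $T$ gives $\|\mu_p - \hat{\mu}_T\|_{\H} = \|w_T\|/T = O(1/T)$, and Cauchy--Schwarz finishes the two function-value claims.

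The main obstacle is exactly this interior-margin condition. In a generic infinite-dimensional RKHS the marginal polytope need not contain $\mu_p$ with positive margin, the geometric decrease collapses, and one is left with only the $O(1/\sqrt{T})$ Monte-Carlo rate. Verifying the margin is the whole technical content behind the fast herding rate, which is why the original result in \citep{kernel_herding,herding_thesis} restricts to the settings the paper also explicitly invokes.
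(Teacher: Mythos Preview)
Your argument is correct and is exactly the one given in the cited source \cite{kernel_herding}: bound $\|w_T\|$ via the greedy recursion plus the interior-of-polytope margin, then divide by $T$ and apply Cauchy--Schwarz. The present paper does not supply its own proof---it simply defers to \cite{kernel_herding} with the remark ``utilizing Koksma Hlawka inequality''; that phrase refers to the RKHS analogue $|\E_p[f]-\E_{\hat p}[f]| = |\langle f,\mu_p-\mu_{\hat p}\rangle| \le \|f\|_{\H}\,\|\mu_p-\mu_{\hat p}\|_{\H}$, which is precisely the Cauchy--Schwarz reduction you already use.
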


The proof can be found in~\cite{kernel_herding}, utilizing Koksma Hlawka inequality. Proposition~\ref{prop:herding_rate} shows that the convergence rate of empirical embedding from kernel herding is $O(T^{-1})$ which is faster than the convergence rate between empirical mean embedding to the population version from empirical samples. Hence, sampling $P(x,i)$ does not make the convergence rate, from $\sum_i \hat{w_i}\Phi_i(\cdot)$ to $\sum_i w_i \mu(P_i)$ slower.
\newline
\textbf{Learning classifier from samples:} 
With the samples generated from the herding step, we train a selector $g$ via the following loss 
$$
\mathcal{L}_c(\hat{P}(x,i),g) = \int L_c(g(x), i) \d\hat{P}(x,i)
$$
We do not have direct access to $P(x,i)$ as the selector is learned from the generated empirical samples. 
We assume the loss $L_c \in \H$ and thus bounded, and the training loss is small, i.e. $\mathcal{L}_c(g^{\ast}; x,i) \leq \varepsilon$ for $g^{\ast}= \argmin_g{\mathcal{L}_c(P(x,i),g)}
$.

\begin{Lemma}\label{lem:classifier_error}
Let optimal selector $g^{\ast}= \argmin_g{\mathcal{L}_c(P(x,i),g)}$ and $\mathcal{L}_c(P(x,i),g^{\ast}) \leq \varepsilon$; let the estimated embedding from the deployment phase and \eqref{eq:w_solution} be $\sum_i\hat{w_i}\Phi_i(\cdot)$. Then, the population loss using the learned classifier $g$ is
$$
\mathcal{L}_c(P(x,i),g) = \int L_c(g(x), i) \d P(x,i)
\leq \varepsilon + O(\frac{1}{\sqrt{N}} + \frac{1}{\sqrt{M}})$$
\end{Lemma}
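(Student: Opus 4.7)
The plan is to run a standard empirical-risk-minimisation comparison between the learned selector $g$ and the oracle $g^{\ast}$, and then translate the resulting distribution-shift term into an RKHS-norm distance between kernel mean embeddings, exploiting the assumption $L_c \in \H$. The two ingredients I would invoke are Lemma~\ref{lem:w_estimation} (for the mixture-weight error) and Proposition~\ref{prop:herding_rate} (for the herding error).

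First, since $g$ is trained by minimising $\mathcal{L}_c(\hat{P}(x,i),\cdot)$ on the mimic sample, we have $\mathcal{L}_c(\hat{P}(x,i),g) \leq \mathcal{L}_c(\hat{P}(x,i),g^{\ast})$. Adding and subtracting this inequality together with $\mathcal{L}_c(P(x,i),g^{\ast}) \leq \varepsilon$ yields
\begin{equation*}
\mathcal{L}_c(P(x,i),g) \;\leq\; \varepsilon \;+\; 2\sup_{h}\bigl|\mathcal{L}_c(P(x,i),h)-\mathcal{L}_c(\hat{P}(x,i),h)\bigr|,
\end{equation*}
which reduces the problem to a uniform bound on the loss gap between the true and the mimicked joint distribution.

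Second, because $L_c\in\H$, the reproducing property gives, for any selector $h$,
\begin{equation*}
\mathcal{L}_c(P(x,i),h)-\mathcal{L}_c(\hat{P}(x,i),h) \;=\; \langle L_c(h(\cdot),\cdot),\,\mu_P-\mu_{\hat{P}}\rangle_{\H} \;\leq\; \|L_c\|_{\H}\,\|\mu_P-\mu_{\hat{P}}\|_{\H},
\end{equation*}
where $\mu_P$ and $\mu_{\hat{P}}$ are the joint KMEs on $\X\times\{1,\dots,c\}$. Taking an index-wise product kernel, the joint KMEs split across providers, so the triangle inequality gives
\begin{equation*}
\|\mu_P-\mu_{\hat{P}}\|_{\H} \;\leq\; \sum_{i=1}^{c}|w_i-\hat{w}_i|\,\|\mu_{P_i}\|_{\H}\;+\;\sum_{i=1}^{c}\hat{w}_i\,\|\mu_{P_i}-\mu_{\hat{P}_i}\|_{\H}.
\end{equation*}
The first sum is the mixture-weight error, which by the argument of Lemma~\ref{lem:w_estimation} is $O(1/\sqrt{M}+1/\sqrt{N})$; the second sum is the per-provider herding error, which by Proposition~\ref{prop:herding_rate} is $O(1/T)$ and hence negligible once enough mimic samples are drawn. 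Chaining the three bounds gives exactly $\mathcal{L}_c(P(x,i),g) \leq \varepsilon + O(1/\sqrt{M}+1/\sqrt{N})$.

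The main obstacle I expect is the formal justification of the joint-space KME decomposition: Lemma~\ref{lem:w_estimation} is stated for the marginal embedding $\sum_i\hat{w}_i\Phi_i(\cdot)$, and one has to check that its $O(1/\sqrt{M}+1/\sqrt{N})$ rate lifts coordinate-wise to the joint embedding on $\X\times\{1,\dots,c\}$, carrying the same $\hat{\bm w}$ from \eqref{eq:w_solution}. A minor additional point is uniform boundedness of $\|L_c(h(\cdot),\cdot)\|_{\H}$ over the selector class, which is needed for the $\sup_h$ step and is tacitly absorbed by the $L_c\in\H$ hypothesis.
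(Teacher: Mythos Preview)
Your proposal is correct and follows essentially the same route as the paper: both arguments reduce the population loss of $g$ to the oracle loss $\varepsilon$ plus an RKHS inner product with $\mu_P-\mu_{\hat P}$, and then invoke Lemma~\ref{lem:w_estimation} and Proposition~\ref{prop:herding_rate} to bound that embedding gap by $O(1/\sqrt{N}+1/\sqrt{M})$. Your concern about lifting the marginal bound to the joint space $\X\times\{1,\dots,c\}$ is exactly what the paper addresses in the remark immediately after the proof, via the product kernel $k(x,\cdot)\otimes h(i,\cdot)$ with $h$ linear on the finite index set.

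The one genuine difference is in bookkeeping: the paper's displayed inequality writes a single correction term $\langle L_c(g^{\ast}),\mu_k(P)-\mu_k(\hat P)\rangle$, effectively collapsing the ERM comparison, whereas you carry out the standard add--subtract argument and obtain two distribution-shift terms (hence your factor $2\sup_h$). Your version is the more careful of the two---the paper's single-term display hides the step $\mathcal{L}_c(\hat P,g)\le\mathcal{L}_c(\hat P,g^{\ast})$ and the companion term $\langle L_c(g),\mu_k(P)-\mu_k(\hat P)\rangle$---but both collapse to the same $O(1/\sqrt{N}+1/\sqrt{M})$ rate, so the discrepancy is cosmetic rather than mathematical.
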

\begin{proof}
Let $\hat{P}(x,i)$ be herding samples from $\sum_i \hat{w}_i \Phi_i(\cdot)$. Then
$$
\mathcal{L}_c(P(x,i),g) \leq
\underset{{\leq \varepsilon}}{\underbrace{\mathcal{L}_c(P(x,i),g^{\ast})}}
+ \langle L_c(g^{\ast}) , \mu_k(P) - \mu_k(\hat{P})\rangle.
$$

By Lemma \ref{lem:w_estimation} and Proposition \ref{prop:herding_rate}, we know $\mu_k(P) - \mu_k(\hat{P})$ approaches zero at rate $ O(\frac{1}{\sqrt{N}} + \frac{1}{\sqrt{M}})$ in RKHS norm. As $L_c$ is a bounded function, the second term goes to zero at rate $ O(\frac{1}{\sqrt{N}} + \frac{1}{\sqrt{M}})$, which completes the proof.
\end{proof}
Note that $L_c(g^{\ast})$ is a bounded function that only depends on $g^{\ast}$, as $P$ and $\hat{P}$ both represent the joint distribution of $(x,i)$. Alternatively, for discrete random variable $I$ that is finite, it is equivalent to see the embedding as $\mu_k(P) = \int k(x,\cdot) \otimes h(i,\cdot) \d P(x,i)$ where $h(i,\cdot)$ is the linear kernel w.r.t. $I$. 

\begin{Theorem}[Instance-recurrent bound] 
Assume for the source models, $$\forall i, \mathcal{L}(P_i,f, \hat{f}_i)=\mathds{E}_{x\sim P_i }\big[L(\widehat{f}_i(x),f(x))\big]\leq \epsilon,$$ where $L \in \H$ and bounded; the classification error of trained classifier $g$ is small, i.e. $\mathcal{L}_c(P(x,i),g)\leq \varepsilon$, where $L_c \in \H$ and bounded. We further assume a Lipschitz condition between the loss used for task and the loss used for training classifier $g$: $\|L(f, \hat{f}_i) -L(f, \hat{f}_j)\| \leq \eta \| L_c(i,j) \| $ for some $\eta < \infty$. Then the RKME estimator $\hat{f}_t = f_{g(x)}(x)$ satistifies 
$$\mathcal{L}(P_t,f,\hat{f}_t) \leq \epsilon + \varepsilon + O(\frac{1}{\sqrt{N}} + \frac{1}{\sqrt{M}})$$
\end{Theorem}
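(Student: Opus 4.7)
The plan is to decompose the population loss $\mathcal{L}(P_t,f,\hat{f}_t)$ through the latent component variable $I$ introduced before Proposition~\ref{prop:herding_rate}. Because the mixture assumption $P_t=\sum_i w_i P_i$ yields a joint distribution $P(x,i)=w_i P_i(x)$ whose $x$-marginal equals $P_t$, I would rewrite
\begin{equation*}
\mathcal{L}(P_t,f,\hat{f}_t)=\E_{(x,i)\sim P(x,i)}\big[L(\hat{f}_{g(x)}(x),f(x))\big].
\end{equation*}
Inserting the ``oracle'' predictor $\hat{f}_i$ and applying the triangle inequality splits the right-hand side into an intrinsic-error term $\E[L(\hat{f}_i(x),f(x))]$ and a misassignment term $\E[|L(\hat{f}_{g(x)}(x),f(x))-L(\hat{f}_i(x),f(x))|]$. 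This decomposition is the key move, because it cleanly separates provider-side risk from selector-side risk.

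For the intrinsic term, conditioning on $I=i$ and invoking the small-loss assumption \eqref{eq:small-loss} on the providers' models gives
\begin{equation*}
\E[L(\hat{f}_i(x),f(x))]=\sum_i w_i\,\mathcal{L}(P_i,f,\hat{f}_i)\leq\sum_i w_i\epsilon=\epsilon.
\end{equation*}
For the misassignment term, I would combine the Lipschitz hypothesis $\|L(f,\hat{f}_i)-L(f,\hat{f}_j)\|\leq\eta\|L_c(i,j)\|$ with the reproducing property and boundedness of $k$ to obtain a pointwise bound of the form $|L(\hat{f}_{g(x)}(x),f(x))-L(\hat{f}_i(x),f(x))|\leq C\eta\,L_c(g(x),i)$, with $C$ depending only on $\sup_x k(x,x)$. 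Taking the expectation under $P(x,i)$ reduces this to $C\eta\,\mathcal{L}_c(P(x,i),g)$, which Lemma~\ref{lem:classifier_error} bounds by $C\eta\varepsilon+C\eta\cdot O(1/\sqrt{N}+1/\sqrt{M})$. Summing the two pieces and absorbing $C$ and $\eta$ into the constants gives exactly the claimed inequality.

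The main obstacle is the second step: turning an RKHS-norm Lipschitz bound on two loss functions into a scalar expectation bound that feeds directly into Lemma~\ref{lem:classifier_error}, with extra care required because $g(x)$ is a data-dependent random index rather than a fixed $j$. Once that bridge is in place, everything else is routine bookkeeping, since the machinery from the earlier results is already assembled: Lemma~\ref{lem:w_estimation} guarantees $\|\mu_k(P_t)-\sum_i\widehat{w}_i\Phi_i\|=O(1/\sqrt{N}+1/\sqrt{M})$, Proposition~\ref{prop:herding_rate} ensures that the $O(T^{-1})$ kernel-herding error does not dominate this rate, and Lemma~\ref{lem:classifier_error} then certifies that the selector trained on the mimicked sample approximates the true component assignment on $P_t$ to within $\varepsilon+O(1/\sqrt{N}+1/\sqrt{M})$.
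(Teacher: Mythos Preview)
Your proposal is correct and follows essentially the same route as the paper: decompose $\mathcal{L}(P_t,f,\hat f_t)$ through the joint $(x,i)$-distribution, split into an oracle term (bounded by $\epsilon$ via \eqref{eq:small-loss}) and a misassignment term, control the latter with the Lipschitz hypothesis to reduce it to $\eta\,\mathcal{L}_c(P(x,i),g)$, and finish with Lemma~\ref{lem:classifier_error}. The paper's only cosmetic difference is that it first restricts the misassignment integral to $\{g(x)\neq i\}$ and then drops the restriction by nonnegativity of $L_c$, whereas you go straight through the triangle inequality; and you are actually more explicit than the paper about the step converting the RKHS-norm Lipschitz condition into a pointwise bound, which the paper applies without comment.
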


\begin{proof}
The samples approximating $P_t$ are generated from estimated mean embedding via herding.  Applying the result in Lemma \ref{lem:w_estimation} and Lemma \ref{lem:classifier_error}, $\widehat{w}_i\Phi_i(\cdot)$ are $\delta$-consistent and assignment error is $(\delta+\varepsilon)$-consistent, for $\delta =O(\frac{1}{\sqrt{N}} + \frac{1}{\sqrt{M}}) $.
\begin{align*}
&\mathcal{L}(P_t,f, \hat{f}_t)=\int_{\X} L(f(x),\hat{f}_t(x)) \dP(x|i)p(i)  \\
&= \sum_i w_i \int_{\X_i} L(f(x),\hat{f}_t(x))\dP(x|i)\\
&=\sum_i w_i \int_{\X_i} L(f(x),\hat{f}_{g(x)}(x)) \dP(x|i) \\
& \leq \sum_i w_i \int_{\X_i} L(f(x),\hat{f}_{i}(x)) \dP(x|i)
+ \sum_i w_i \int_{\X_i, g(x)\neq i} \hspace{-.9cm}L(f(x),\hat{f}_{g(x)}(x)) - L(f(x),\hat{f}_{i}(x)) \dP(x|i)\\
&\leq \sum_i w_i \epsilon + \eta \sum_i w_i \int_{\X_i, g(x)\neq i} \hspace{-0.9cm}L_c(g(x), i)\dP(x|i)\\
&\leq \sum_i w_i \epsilon + \eta \sum_i w_i \int_{\X_i} L_c(g(x), i)\dP(x|i)\\
&= \epsilon + \eta \int_{\X} L_c(g(x), i)\dP(x,i)\\
&\leq \epsilon + \eta \varepsilon + O(\frac{1}{\sqrt{N}} + \frac{1}{\sqrt{M}})
\end{align*}
where $\X_i = \{x: I=i\}$.
The third last inequality holds as we are looking into the set where the component assignment are mis-classified. Using the Lipschitz condition, we bounded the loss using training loss results of $g$.
The second last inequality dropping $g(x)=i$ holds because $L_c$ is non-negative loss function.
By Lemma \ref{lem:classifier_error}, the last inequality holds.
\end{proof}

\section{Related Work}
Domain adaptation~\citep{multisource_suvery} is to solve the problem where the training examples (source domain) and the testing examples (target domain) are from different distributions. A common assumption is the examples in source domain are accessible for learning the target domain, while learnware framework is designed to avoid the access at test time. Domain adaptation with multiple sources~\citep{DA_multiple08,DA_multiple18} is related to our problem setting. Their remarkable theoretical results clearly show that given the direct access to distribution, a weighted combination of models can reach bounded risk at the target domain, when the gap of distributions are bounded by R{\'{e}}nyi divergence~\citep{DA_UAI09}. Compared with the literature, it is the first time that the prediction is made from dynamic selection, which is capable of eliminating useless models and more flexible for various types of model. Furthermore, density estimation is considered difficult in high dimensional space, while ours does not depend on estimated density function but implicitly matching distributions via RKME for model selection.

Data privacy is a common concern in practice. In terms of multiple participants setting like ours, multiparty learning~\citep{PP_Aggregation}, and recently a popular special case called federated learning~\citep{federated,federated_google} are related. Existing approaches for multiparty learning usually assume the local dataset owner follows a predefined communication protocol, and they jointly learn one global model by continuously exchanging information to others or a central party. Despite the success of that paradigm such as Gboard presented in \cite{gboard}, we observe that in many real-world scenarios, local data owners are unable to participate in such an iterative process because of no continuous connection to others or a central party. Our two-phase learnware framework avoids the intensive communication, which is preferable when each data owner has sufficient data to learn her own task.

Model reuse methods aim at reusing pre-trained models to help related learning tasks. In the literature, this is also referred to as ``learning from auxiliary classifiers''~\citep{Auxiliary} or ``hypothesis transfer learning''~\citep{Kuzborskij13,HTL_tranformation}. Generally speaking, there are two ways to reuse existing models so far. One is updating the pre-trained model on the current task, like fine-tuning neural networks. Another is training a new model with the help of existing models like biased regularization~\citep{Tommasi14} or knowledge distillation~\citep{nec45,distillation}. Both ways assume all pre-trained models are useful by prior knowledge, without a specification to describe the reusability of each model. Our framework shows the possibility of selecting suitable models from a pool by their resuabilities, which works well even when existing models are ineffective for the current task.

These previous studies did not touch one of the key challenge of learnware~\citep{learnware}: given a pool of pre-trained models, how to judge whether there are some models that are helpful for the current task, without accessing their training data, and how to select and reuse them if there are. To the best of our knowledge, this paper offers the first solution.

\section{Experiments}
To demonstrate the effectiveness of our proposal, we evaluate it on a toy example, two benchmark datasets, and a real-world project at Huawei Technologies Co., Ltd about communication quality.
\subsection{Toy Example} \label{sec:toy}
In this section, we use a synthetic binary classification example including three providers to demonstrate the procedure of our method. This example recalls the intuitive illustration in Fig.~\ref{fig:upload_phase}\&\ref{fig:deployment_phase}, and we will provide the code in CodeOcean~\citep{codeocean}, a cloud-based computational reproducibility platform, to fully reproduce the results and figures.

Fig.~\ref{fig:toy_local_dataset} shows the problem setting. Circle, triangle, and square points are different local datasets from each provider. Each dataset is drawn from a mixture of two Gaussians. The means of these Gaussians are arranged around a circle denoted by the grey dashed line. Points inside the grey circle are labeled as blue class, and points outside the circle are labeled as yellow class, thus the binary classification problem. We should emphasize that their local datasets are unobservable to others, they are plotted in the same figure just for space-saving. RBF kernel SVMs are used as pre-trained models.
\begin{figure}[h]
\centering
\begin{subfigure}[b]{0.23\textwidth}
\captionsetup{skip=1pt}
\includegraphics[width=\textwidth]{./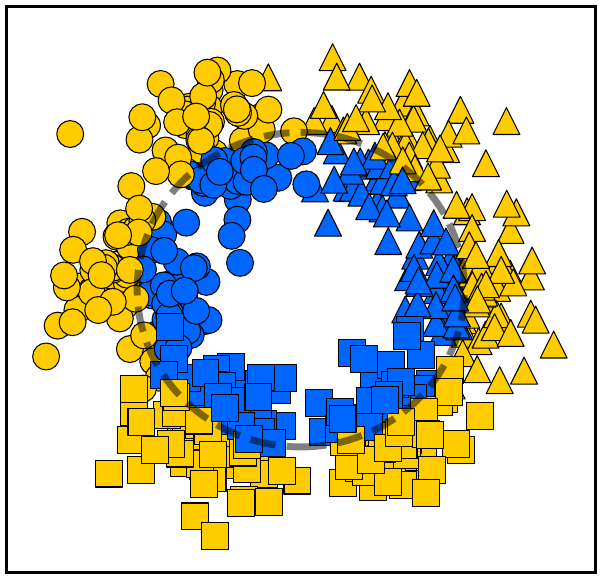}
\caption{Local datasets}\label{fig:toy_local_dataset}
\end{subfigure}
\begin{subfigure}[b]{0.23\textwidth}
\captionsetup{skip=1pt}
\includegraphics[width=\textwidth]{./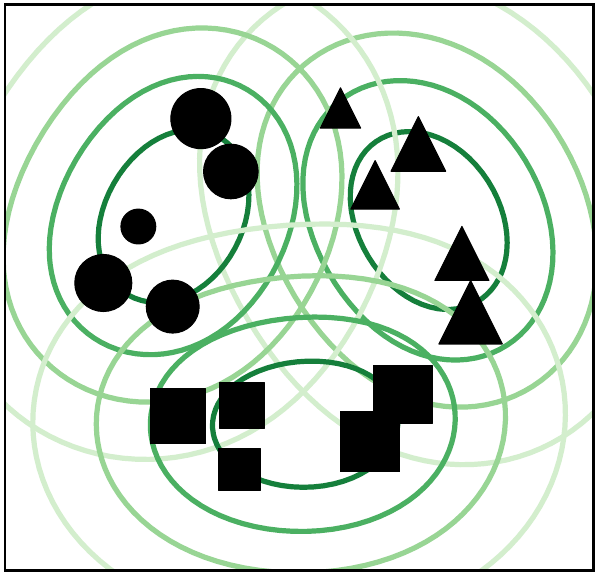}
\caption{RKMEs}\label{fig:toy_rs}
\end{subfigure}
 \caption{Upload phase. (a) Labeled three local private datasets owned by different providers. (b) Constructed points $\{z_m\}$ in the reduced set of KME, bigger marker means larger weight $\{\beta_m\}$. The deeper green contour means higher KME score.}\label{fig:preparation}
\end{figure}

The results of reduced set construction by running Algorithm \ref{alg:rs} are shown in Fig.~\ref{fig:toy_rs}. We set $M=5$ here, which is enough for approximating the empirical KME in this example. Different from the original empirical KME $\sum_{n=1}^N \frac{1}{N}k(x_n,\cdot)$, where all points contribute equally to the embedding, the constructed reduced KME $\sum_{m=1}^M \beta_m k(z_m,\cdot)$ introduced more freedom by using variable weights $\{ \beta_m\}$. In the figure, we use the size of markers to illustrate the value of weights. These reduced sets implicitly ``remember'' the Gaussian mixtures behind local datasets and serve as specifications to tell future users where each pre-trained model works well.

In the deployment phase, we evaluate both task-recurrent and instance-recurrent assumptions. In Fig.~\ref{fig:deployment_tra}, we draw test points from the same distribution of the ``circle'' dataset. As expected, our method successfully finds the match and predict all the data by the pre-trained ``circle'' model. 
\begin{figure}[!htb]
\centering
\begin{subfigure}[b]{0.23\textwidth}
\captionsetup{skip=1pt}
\includegraphics[width=\textwidth]{./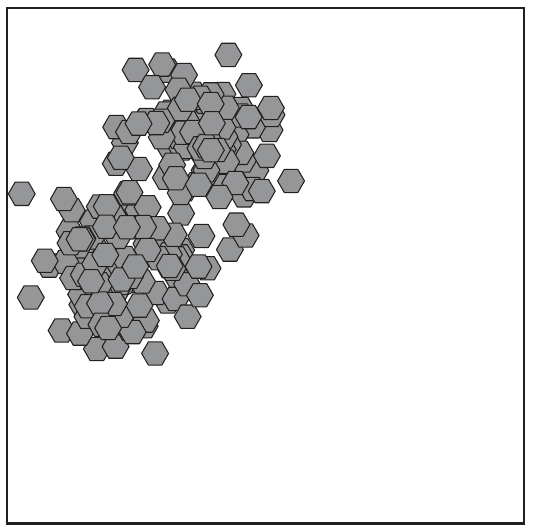}
\caption{Testing data}\label{fig:toy_ira_unlabel}
\end{subfigure}
\begin{subfigure}[b]{0.23\textwidth}
\captionsetup{skip=1pt}
\includegraphics[width=\textwidth]{./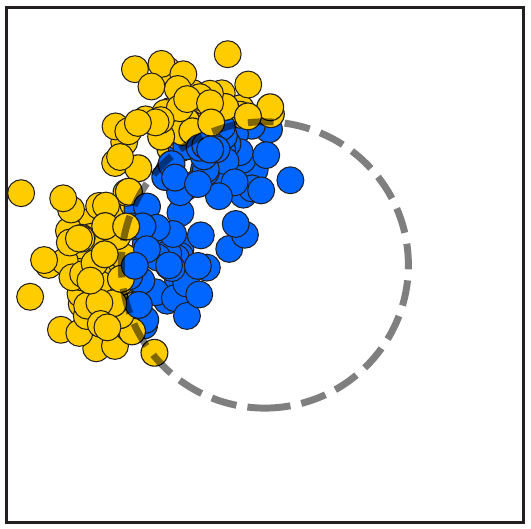}
\caption{Predictions}\label{fig:toy_tra_label}
\end{subfigure}
 \caption{Deployment phase: task-recurrent. (a) Testing data when task-recurrent assumption holds. (b) Predictions, achieved accuracy 97\%.}\label{fig:deployment_tra}
\end{figure}

In instance-recurrent setting, we set the mixture weight of (circle, triangle, square) to $\bm{w}=(0.7,0.3,0.0)$ and test our method. Our estimated mixture weight is $\widehat{\bm{w}}=(0.701,0.285,0.014)$, closing to the groundtruth. Given the accurately estimated mixture weight, we are able to generate a mimicked sample by kernel herding. It is clear in Fig~\ref{fig:toy_ira_mimic} that the drawn distribution is similar to the testing data and with assigned labels. The weight of square is low but not zero, therefore there are still few squares in the sample set. The learned selector divides the feature space into three regions, and all the testing points fall into the  ``circle'' or ``triangle'' region. Predictions in Fig.~\ref{fig:toy_ira_label} achieves accuracy 92.5\%, and errors are mainly made from pre-trained models themselves, not from the selection of ours.
\begin{figure}[!htb]
\centering
\begin{subfigure}[b]{0.23\textwidth}
\captionsetup{skip=1pt}
\includegraphics[width=\textwidth]{./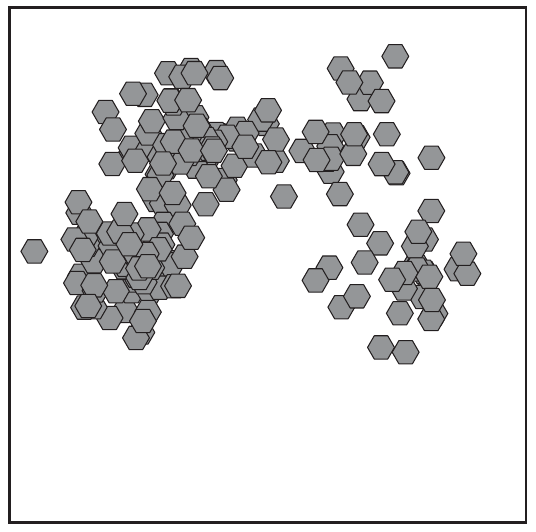}
\caption{Testing data}\label{fig:toy_ira_unlabel}
\end{subfigure}
\begin{subfigure}[b]{0.23\textwidth}
\captionsetup{skip=1pt}
\includegraphics[width=\textwidth]{./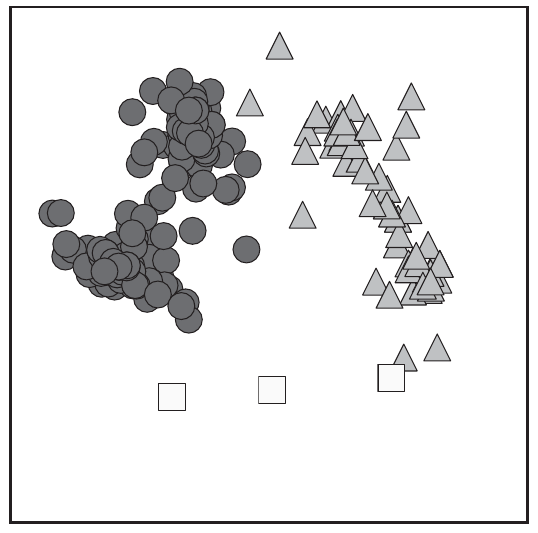}
\caption{Generated data}\label{fig:toy_ira_mimic}
\end{subfigure}
\begin{subfigure}[b]{0.229\textwidth}
\captionsetup{skip=1pt}
\includegraphics[width=\textwidth]{./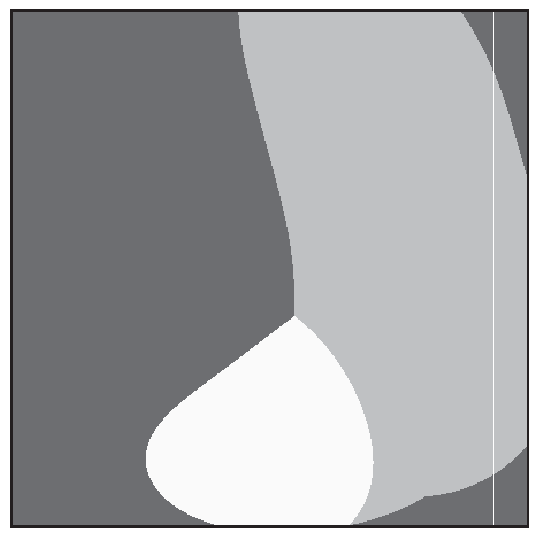}
\caption{Decision of selector}\label{fig:toy_ira_region}
\end{subfigure}
\begin{subfigure}[b]{0.23\textwidth}
\captionsetup{skip=1pt}
\includegraphics[width=\textwidth]{./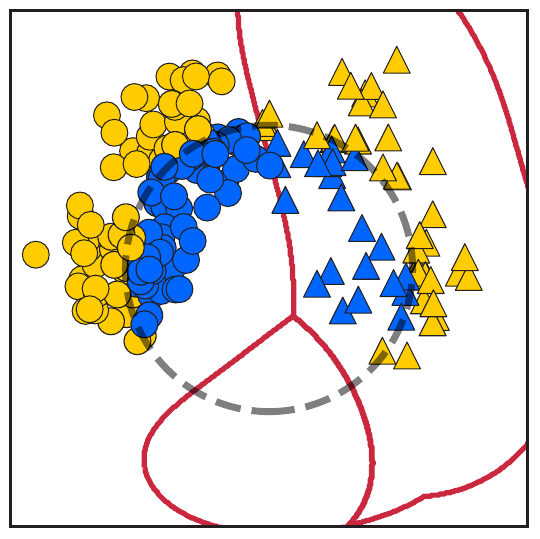}
\caption{Predictions}\label{fig:toy_ira_label}
\end{subfigure}
 \caption{Deployment phase: instance-recurrent. (a) Testing data when instance-recurrent assumption holds. True mixture weight is $\bm{w}=(0.7,0.3,0.0)$. (b) Generated mimicked data by kernel herding with labels. (c) The decision regions of learned selector. (d) Predictions, achieved accuracy 92.5\%. The red line is the decision boundary of the learned selector.}\label{fig:deployment_ira}
\end{figure}

The toy example gives a visual demonstration of our two-phase framework. We can see from this example that the inaccessibility of private training data and reusability of pre-trained models are met. In the next section, we post results on two benchmark datasets.

\subsection{Benchmark} \label{sec:benchmark}
In this section, we evaluate our proposal on two widely used benchmark datasets: image dataset \texttt{CIFAR-100}~\citep{cifar100} and text dataset \texttt{20-newsgroup}~\citep{20newsgroup}. 

\texttt{CIFAR-100} has 100 classes and they are grouped into 20 superclasses, and each superclass contains 5 classes. For example, the superclass ``flower'' includes \{orchid, poppy, rose, sunflower, tulip\}. It is natural to use this dataset to simulate our setting. We divide \texttt{CIFAR-100} into 20 local datasets, each having images from one superclass, and build 5-class local neural network classifiers on them.

\texttt{20-newsgroup} is a popular text classification benchmark and it has similar hierarchical structure as \texttt{CIFAR-100}. There are 5 superclasses \{comp, rec, sci, talk, misc\} and each is considered a local dataset for training local models in the upload phase.

Kernel methods usually cannot work directly on the raw-pixel level or raw-document level, therefore we use off-the-shelf deep models to extract meaningful feature vectors. For \texttt{CIFAR-100}, features are the outputs from the penultimate layer of ResNet-110.\footnote{Trained by running the command of ResNet-110 in \url{https://github.com/bearpaw/pytorch-classification/blob/master/TRAINING.md}} For \texttt{20-newsgroup}, an LSTM is built on GloVe~\citep{glove} word embeddings, and features are extracted from the global max-pooling layer. These feature vectors are used for RKME construction in the upload phase. Gaussian kernel as defined in \eqref{eq:gaussian_kernel} with $\gamma=0.01$ is used in both datasets, and the size of the reduced set is set to $M=10$, which is a tiny ratio of the original datasets.

We compare our method with a naive baseline MAX and a related method HMR~\citep{HMR}. MAX simply uses all the pre-trained models to predict one test instance, and takes out the most confident predicted class. HMR incorporates a communication protocol which exchanges several selected key examples to update models, and then does predictions like MAX. In this comparison we allow HMR to exchange up to 1000 examples. All three methods use the same pool of pre-trained models. Instance-recurrent setting is simulated by randomly mixing testing data from different number of solved tasks. The mean accuracy of 10 times each setting are reported in Table~\ref{table:cifar100}\&\ref{table:newsgroup}, and the last row reports the non-private accuracy of a global model trained on merged data.

\begin{table}
\caption{Results of \texttt{CIFAR-100} in accuracy(\%).}\label{table:cifar100}
\vspace{-0.2cm}
\centering
\begin{tabular}{c c c c c c} 
\toprule
& \multicolumn{1}{c}{Task-recurrent} & \multicolumn{4}{c}{Instance-recurrent}\\
\#Mixing tasks& 1 & 2 & 5 & 10 & 20\\
\midrule
MAX & 43.00 & 42.10 & 41.51 & 41.62 & 41.44\\
HMR & 70.58 & 68.91 & 68.93 & 68.88 & \bftab 68.81\\
Ours& \bftab 86.22 & \bftab 72.91 & \bftab 72.57 & \bftab 71.07 & 68.79\\
\midrule\midrule
Global&75.08& 73.24 & 73.31 & 71.86 & 73.24\\
\bottomrule
\end{tabular}
\end{table}

\begin{table}
\caption{Results of \texttt{20-newsgroup} in accuracy(\%).}\label{table:newsgroup}
\vspace{-0.2cm}
\centering
\begin{tabular}{c c c c c c} 
\toprule
& \multicolumn{1}{c}{Task-recurrent} & \multicolumn{4}{c}{Instance-recurrent}\\
\#Mixing tasks& 1 & 2 & 3 & 4 & 5\\
\midrule
MAX & 58.65 & 55.76 & 53.03 & 51.94 & 50.68\\
HMR & 72.01 & 72.19 & 70.86 & 70.53 & 70.09\\
Ours& \bftab 83.13 & \bftab 76.03 & \bftab 75.10 & \bftab 74.02 & \bftab 72.68\\
\midrule\midrule
Global&72.06& 73.24 & 73.31 & 71.86 & 73.24\\
\bottomrule
\end{tabular}
\end{table}

It is clear that our method performs best with a large margin in the task-recurrent setting. Other methods cannot exploit the prior knowledge that the current task is identical to one of the solved tasks, while our minimum-MMD measure can successfully find out the fittest pre-trained model. 

In the instance-recurrent setting, ours is the best in most cases. We are even better than the non-private global model in the \texttt{20-newsgroup} dataset. It is possible because the global model is an ERM optimizer on the merged data, which is the best model for i.i.d testing examples but not adaptive to a changed unknown distribution. While ours can estimate the mixing weight and adapt to a different biased test distribution in the deployment phase. Ours is increasingly better when the number of mixing tasks goes smaller, because we can preclude some impossible output classes by selecting right pre-trained models.

Besides, we should keep in mind that the comparison is unfair because HMR and global are not fully privacy-preserving methods. Our proposal gets better or competitive performance without exposing any raw data points.

Section \ref{sec:toy} and \ref{sec:benchmark} show results on classification problems. We then apply ours to a real regression problem. 
\subsection{Real-World Project}
Communication quality is the key to user experience for a telecommunications company. We participated in an industrial project called \texttt{crystal-voice} at Huawei Technologies Co., Ltd. Huawei tested a novel technology ``deep cover'' on base stations to improve the quality. But engineers observed the gain of quality varies because of differences about user behaviors and environments among stations. They want to predict how much can we gain in a new base station, to decide whether it is profitable to deploy ``deep cover'' on it.

Every user covered by a base station is represented by a feature vector, and a real-valued quality gain. It is strictly forbidden to move users' information out of stations, but each station has enough data to build a strong local model and share it in a pool. Therefore, our proposal is a wise choice to handle this problem.

In the upload phase, a local ridge regression model is trained in each base station. We then construct RKME (set size $M=50$, Gaussian kernel $\gamma=0.5$) as the specification, and upload the models and specifications into a learnware pool. All the vectors in the specification are constructed ``pseudo'' users, protecting the raw information from thousands of users. 

In the deployment phase, we run instance-recurrent procedure on a new base station. There are 8 anonymous base stations in total, therefore we test our method 8 times. At each time, we select one of them as the current task and the rest 7 as solved tasks. 

Four methods are compared with ours. Two model reuse baselines RAND/AVG and two transfer learning methods KMM~\citep{KMM}/mSDA~\citep{MSDA}. RAND means randomly selecting one pre-trained model from other base stations to predict each user's gain. AVG means averaging the outputs of all regressors in the model pool as predictions. KMM reweights source data to train a better model for the testing data. mSDA learns robust feature representations over all domains.

We should notice that the model reuse methods are private, but transfer learning methods are non-private because they need to see both testing and training data in the deployment phase. The mean results are reported in Table~\ref{table:crystal}. MAX and HMR in Section \ref{sec:benchmark} cannot be used for this regression task, while our framework is agnostic to the task and type of pre-trained models.

\begin{table}[htb]
\caption{Results on regressing quality gain of  the \texttt{crystal-voice} project. ``$\downarrow$'' means the lower the better, ``$\uparrow$'' means the higher the better. ``Model reuse'' methods are private, while ``transfer'' methods are non-private.} \label{table:crystal} 
\vspace{-0.2cm}
\setlength{\tabcolsep}{7.5pt}
\centering
\begin{tabular}{c c c c c c c} 
\toprule
  & & RMSE$\downarrow$ & 3p30$\uparrow$ & 5p30$\uparrow$ & 3f1$\uparrow$ \\
\midrule
 \multirow{3}{*}{Model reuse} 
 & RAND  & .0363 & .3730  & .4412 & .7320 \\
 & AVG   & .0326 & .4272  & .4535 & .7712 \\
 & Ours  & \bftab .0279 & \bftab .5281 & .5205 & \bftab .8082 \\
 \midrule\midrule
 \multirow{2}{*}{Transfer} 
 & KMM   & .0291 & .5018  & .5222 & .7911 \\
 & mSDA  & .0285 & .5105  & \bftab .5324 & .8034 \\
\bottomrule
\end{tabular}
\end{table}
Our method not only outperforms model reuse baselines in terms of root-mean-square error (RMSE), but is also superior on the other measurements required in the real business. ``3p30'' is the ratio of users whose gain value is above 3\% and the prediction error is lower than 30\%. ``5p30'' is defined similarly. ``3f1'' is the F1-measure if we consider the users whose gain value above 3\% are positive class. Our method is even better than mSDA and KMM in some measurements. Considering these two transfer learning methods break the privacy, ours sacrifice a little performance in ``5p30'' while keeping the data safe in base stations.

\section{Conclusion} 
In this paper, we propose reduced kernel mean embedding as the specification in the learnware paradigm, and implement a two-phase pipeline based on it. RKME is shown to protect raw training data in the upload phase and can identify reusable pre-trained models in the deployment phase. Experimental results, including a real industrial project at Huawei, validate the effectiveness of it. This is the first valid specification with practical success to our best knowledge.

In the future, we plan to incorporate more powerful kernel methods to directly measure the similarity in the raw high-dimensional feature space when constructing RKME. It remains an open challenge to design other types of valid specifications, under assumptions which are even weaker than the instance-recurrent assumption.
\bibliography{myRef}
\bibliographystyle{abbrvnat}

\end{document}